\newtheorem{theo}{Theorem}
\newtheorem{definition}{Definition}
\newtheorem{lemma}{Lemma}
\newtheorem{corollary}{Corollary}
\newcommand{\argmin}{\mathop{\arg\min}}
\newcommand{\argmax}{\mathop{\arg\max}}
\title{A Generalized Unbiased Risk Estimator for Learning with Augmented Classes}
\author{
  Senlin Shu$^1$, Shuo He$^{2}$, Haobo Wang$^{3}$, Hongxin Wei$^{4}$, Tao Xiang$^{1}$, Lei Feng$^{4}$\thanks{Corresponding author: Lei Feng $<$lfengqaq@gmail.com$>$.}\\
  $^1$Chongqing University, China\\
  $^2$University of Electronic Science and Technology of China, China\\
  $^3$Zhejiang University, China \\
  $^4$Nanyang Technological University, Singapore\\
}
\begin{document}
\maketitle





\begin{abstract}
In contrast to the standard learning paradigm where all classes can be observed in training data, \emph{learning with augmented classes} (LAC) tackles the problem where augmented classes unobserved in the training data may emerge in the test phase. Previous research showed that given unlabeled data, an \emph{unbiased risk estimator} (URE) can be derived, which can be minimized for LAC with theoretical guarantees. However, this URE is only restricted to the specific type of one-versus-rest loss functions for multi-class classification, making it not flexible enough when the loss needs to be changed with the dataset in practice. In this paper, we propose a \emph{generalized} URE that can be equipped with arbitrary loss functions while maintaining the theoretical guarantees, given unlabeled data for LAC. To alleviate the issue of negative empirical risk commonly encountered by previous studies, we further propose a novel risk-penalty regularization term. Experiments demonstrate the effectiveness of our proposed method.

\end{abstract}

\section{Introduction}
Machine learning approaches have achieved great performance on a variety of tasks, and most of them focus on the stationary learning environment. However, the learning environment in many real-world scenarios could be open and change gradually, which requires the learning approaches to have the ability of handling the distribution change in the non-stationary environment \cite{sugiyama2012machine,gama2014survey,zhao2019distribution,zhang2020unbiased}.

This paper considers a specific problem where the class distribution changes from the training phase to the test phase, called \emph{learning with augmented classes} (LAC). In LAC, some augmented classes unobserved in the training phase might emerge in the test phase. In order to make accurate and reliable predictions, the learning model is required to distinguish augmented classes and keep good generalization performance over the test distribution.

The major difficulty in LAC is how to exploit the relationships between known and augmented classes. To overcome this difficulty, various learning methods have been proposed. For example, by learning a compact geometric description of known classes to distinguish augmented classes that are far away from the description, the anomaly detection or novelty detection methods can be used (e.g., iForest \cite{liu2008isolation}, one-class SVM \cite{scholkopf2001estimating,tax2004support}, and kernel density estimation \cite{parzen1962estimation,kim2012robust}). By exploiting unlabeled data with the low-density separation assumption to adjust the classification decision boundary \cite{da2014learning}, the performance of LAC can be empirically improved.

A recent study \cite{zhang2020unbiased} showed that by exploiting unlabeled data, an \emph{unbiased risk estimator} (URE) for LAC over the test distribution can be constructed under the \emph{class shift condition}. The motivation of this study stems from that although the instances from augmented classes cannot be observed in labeled data, their distribution information may be contained in unlabeled data and estimated by differentiating the distribution of known classes from unlabeled data. Such an URE is advantageous for the learning task, since it can lead to a theoretically grounded method based on empirical risk minimization. However, the URE for LAC derived by the previous study \cite{zhang2020unbiased} is only restricted to the specific type of one-versus-rest loss functions, making it not flexible enough when the loss needs to be changed with the dataset in practice.

To address the above issue, we make the following contributions in this paper:
\begin{itemize}
\item we propose a \emph{generalized} unbiased risk estimator that can be equipped with \emph{arbitrary} loss functions while maintaining the theoretical guarantees, given unlabeled data for learning with augmented classes.
\item We provide a theoretical analysis on the estimation error bound, which guarantees that the convergence of the empirical risk minimizer to the true risk minimizer.
\item We propose a novel risk-penalty regularization term that can be used to alleviate the issue of negative empirical risk commonly encountered by previous studies.
\item We conduct extensive experiments with different base models on widely used benchmark datasets to demonstrate the effectiveness of our proposed method.
\end{itemize}


\section{Preliminaries}
In this section, we introduce some preliminary knowledge. 
\paragraph{Learning with augmented classes.} 
In the training phase of LAC, we are given a labeled training set $\mathcal{D}_{\mathrm{L}}=\{\bm{x}_i,y_i\}_{i=1}^{n}$ independently sampled from a distribution of known classes $P_{\mathrm{kc}}$ over $\mathcal{X}\times\mathcal{Y}^\prime$, where $\mathcal{X}$ is the feature space and $\mathcal{Y}^\prime=\{1,2,\ldots,k\}$ is the label space of $k$ known classes. However, in the test phase, we need to predict unlabeled data sampled from the test distribution $P_{\mathrm{te}}$, where augmented classes unobserved in the training phase may emerge. Since the specific partition of augmented classes is unknown, they are normally predicted as a single class. In this way, the label space of the test distribution $P_{\mathrm{te}}$ is $\mathcal{Y}=\{1,2,\ldots,k,\mathtt{ac}\}$ where $\mathtt{ac}$ denotes the class grouped by augmented classes.
The \emph{class shift condition} \cite{zhang2020unbiased} was introduced to describe the relationship between the distributions of known and augmented classes:
\begin{gather}
\label{data_distribution}
P_{\mathrm{te}} = \theta\cdot P_{\mathrm{kc}} + (1-\theta)\cdot P_{\mathrm{ac}},
\end{gather}
where $\theta\in[0,1]$ indicates the mixture proportion. In addition, a set of unlabeled data $\mathcal{D}_{\mathrm{U}}=\{\bm{x}_i\}_{i=1}^m$ sampled from the test distribution is provided and will be used for model training. This learning scenario of available unlabeled data is conceivable since we can easily collect a large amount of unlabeled data from current environments. Such unlabeled data can be used to enrich the information of unobservable classes in the training phase, despite no revealed supervision information on the unlabeled data.
\paragraph{Unbiased risk estimator for LAC.} Under the class shift condition in LAC, the expected classification risk over the test distribution can be formulated as
\begin{align}
\nonumber
R(f)&=\mathbb{E}_{(\bm{x},y)\sim P_{\mathrm{te}}}[\mathcal{L}(f(\bm{x}),y)]\\
&=\theta\mathbb{E}_{(\bm{x},y)\sim P_{\mathrm{kc}}}[\mathcal{L}(f(\bm{x}),y)]
 + (1-\theta)\mathbb{E}_{(\bm{x},y=\mathtt{ac})\sim P_{\mathrm{ac}}}[\mathcal{L}(f(\bm{x}),\mathtt{ac})].
 \label{classification_risk}
\end{align}
The recent study \cite{zhang2020unbiased} only considered the \emph{one-versus-rest} (OVR) loss function as the classification loss in Eq.~(\ref{classification_risk}), which takes the following form for $k$-class classification:
\begin{gather}
\nonumber
\mathcal{L}_{\mathrm{OVR}}(f(\bm{x}),y) = \psi(f_{y}(\bm{x})) + \sum\nolimits_{i=1,i\neq y}^k \psi(-f_i(\bm{x})),
\end{gather}
where $\psi(\cdot):\mathbb{R}\mapsto\mathbb{R}_+$ denotes a binary loss function. By substituting the OVR loss $\mathcal{L}_{\mathrm{OVR}}$ into the classification risk, we have $R^{\mathrm{OVR}}(f)=\mathbb{E}_{(\bm{x},y)\sim P_{\mathrm{te}}}[\mathcal{L}_{\mathrm{OVR}}(f(\bm{x}),y)]$.
The previous study \cite{zhang2020unbiased} showed that an equivalent expression of $R^{\mathrm{OVR}}(f)$ in the setting of LAC with unlabeled data drawn from the test distribution can be derived:
\begin{align}
\nonumber
R_{\mathrm{LAC}}^{\mathrm{OVR}}(f)&=\theta\cdot\mathbb{E}_{p_{\mathrm{kc}}(\bm{x},y)}[\psi(f_y(\bm{x}))-\psi(-f_y(\bm{x})) + \psi(-f_{\mathtt{ac}}(\bm{x})) - \psi(f_{\mathtt{ac}}(\bm{x}))] \\
\nonumber
&\qquad\qquad\qquad\qquad\qquad\qquad\qquad\qquad\qquad\qquad+ \mathbb{E}_{\bm{x}\sim P_{\mathrm{te}}}[\psi(f_{\mathrm{ac}}(\bm{x}))+\sum\nolimits_{i=1}^k\psi(-f_k(\bm{x}))].
\end{align}
As can be verified, $R^{\mathrm{OVR}}_{\mathrm{LAC}}(f)=R^{\mathrm{OVR}}(f)$, and thus its empirical version $\widehat{R}^{\mathrm{OVR}}_{\mathrm{LAC}}(f)$ is an \emph{unbiased risk estimator} (URE). As we can see, this URE is only restricted to the OVR loss for multi-class classification, making it not flexible enough when the loss needs to be changed with the dataset in practice.

\section{The Proposed Method}
In this section, we first propose a generalized URE and provide a theoretical analysis for the derived URE. Then, we propose a novel risk-penalty regularization term that can be used to alleviate the issue of negative empirical risk.
\subsection{A Generalized Unbiased Risk Estimator}
\begin{theo}
\label{thm1}
Under the class shift condition in Eq.~(\ref{data_distribution}), the classification risk $R(f)$ can be equivalently expressed as:
\begin{align}
\label{LAC_expected_risk}
R_{\mathrm{LAC}}(f) 
&= \theta \mathbb{E}_{(\bm{x},y)\sim P_{\mathrm{kc}}}[\mathcal{L}(f(\boldsymbol{x}),y)-\mathcal{L}(f(\boldsymbol{x}),\mathtt{ac})]
 + \mathbb{E}_{\bm{x}\sim P_{\mathrm{te}}}[\mathcal{L}(f(\boldsymbol{x}),\mathtt{ac})]=R(f).
\end{align}
\end{theo}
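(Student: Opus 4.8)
The plan is to start from the proposed expression $R_{\mathrm{LAC}}(f)$ and show, via the class shift condition, that it collapses exactly to the original risk $R(f)$ of Eq.~(\ref{classification_risk}). The single structural fact I would exploit is that the augmented-class loss $\mathcal{L}(f(\bm{x}),\mathtt{ac})$ depends only on the instance $\bm{x}$ and not on any true label $y$; this is precisely what lets the bridging term $\mathbb{E}_{\bm{x}\sim P_{\mathrm{te}}}[\mathcal{L}(f(\bm{x}),\mathtt{ac})]$ be matched against quantities drawn from $P_{\mathrm{kc}}$ and $P_{\mathrm{ac}}$.

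First I would marginalize the class shift condition of Eq.~(\ref{data_distribution}) over the label coordinate, yielding the instance-marginal mixture $p_{\mathrm{te}}(\bm{x})=\theta\,p_{\mathrm{kc}}(\bm{x})+(1-\theta)\,p_{\mathrm{ac}}(\bm{x})$ on $\mathcal{X}$. Applying this decomposition to the last term of $R_{\mathrm{LAC}}(f)$, I would split
\[
\mathbb{E}_{\bm{x}\sim P_{\mathrm{te}}}[\mathcal{L}(f(\bm{x}),\mathtt{ac})]=\theta\,\mathbb{E}_{\bm{x}\sim P_{\mathrm{kc}}}[\mathcal{L}(f(\bm{x}),\mathtt{ac})]+(1-\theta)\,\mathbb{E}_{\bm{x}\sim P_{\mathrm{ac}}}[\mathcal{L}(f(\bm{x}),\mathtt{ac})].
\]
Next I would substitute this back into $R_{\mathrm{LAC}}(f)$ and expand its first expectation as $\theta\,\mathbb{E}_{(\bm{x},y)\sim P_{\mathrm{kc}}}[\mathcal{L}(f(\bm{x}),y)]-\theta\,\mathbb{E}_{(\bm{x},y)\sim P_{\mathrm{kc}}}[\mathcal{L}(f(\bm{x}),\mathtt{ac})]$. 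Because $\mathcal{L}(f(\bm{x}),\mathtt{ac})$ carries no dependence on $y$, the second of these collapses to the marginal expectation $\theta\,\mathbb{E}_{\bm{x}\sim P_{\mathrm{kc}}}[\mathcal{L}(f(\bm{x}),\mathtt{ac})]$, which exactly cancels the $\theta$-weighted $P_{\mathrm{kc}}$ term produced by the split above. What survives is $\theta\,\mathbb{E}_{(\bm{x},y)\sim P_{\mathrm{kc}}}[\mathcal{L}(f(\bm{x}),y)]+(1-\theta)\,\mathbb{E}_{\bm{x}\sim P_{\mathrm{ac}}}[\mathcal{L}(f(\bm{x}),\mathtt{ac})]$. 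Finally, since every instance sampled from $P_{\mathrm{ac}}$ carries the label $\mathtt{ac}$, the last expectation equals $(1-\theta)\,\mathbb{E}_{(\bm{x},y=\mathtt{ac})\sim P_{\mathrm{ac}}}[\mathcal{L}(f(\bm{x}),\mathtt{ac})]$, so the whole expression reproduces Eq.~(\ref{classification_risk}), i.e.\ $R_{\mathrm{LAC}}(f)=R(f)$.

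I do not expect a genuinely hard step: the identity is essentially a bookkeeping argument driven by the mixture decomposition and a single cancellation. The one point requiring care — the closest thing to an obstacle — is justifying that marginalizing the joint class shift condition over $y$ preserves the coefficients $\theta$ and $1-\theta$ on the instance marginals, and that the label-free loss $\mathcal{L}(f(\bm{x}),\mathtt{ac})$ may be moved freely between joint and marginal expectations. Once these two facts are made explicit, the cancellation and the final matching against $R(f)$ follow immediately.
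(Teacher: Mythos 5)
Your proposal is correct and is essentially the same argument as the paper's, just run in the opposite direction: the paper starts from $R(f)$ and uses $(1-\theta)P_{\mathrm{ac}}=P_{\mathrm{te}}-\theta P_{\mathrm{kc}}$ to rewrite the augmented-class term, whereas you start from $R_{\mathrm{LAC}}(f)$ and split the $P_{\mathrm{te}}$ expectation, but both hinge on the identical mixture decomposition and the same cancellation of $\theta\,\mathbb{E}_{P_{\mathrm{kc}}}[\mathcal{L}(f(\bm{x}),\mathtt{ac})]$. Your explicit remarks on marginalizing over the label and on $\mathcal{L}(f(\bm{x}),\mathtt{ac})$ being label-free are points the paper leaves implicit, but they do not change the route.
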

The proof of Theorem \ref{thm1} is provided in Appendix \ref{proof_theorem_1}. 
Theorem \ref{thm1} shows that we are able to learn a classifier from labeled data from the distribution of known classes and unlabeled data from the test distribution, without any restrictions on the loss function $\mathcal{L}$ for multi-class classification.
\begin{corollary}
\label{coro1}
If the one-versus-rest (OVR) loss $\mathcal{L}_{\mathrm{OVR}}$ is used as the classification loss in our derived risk $R_{\mathrm{LAC}}(f)$, then we can exactly recover the risk $R_{\mathrm{LAC}}^{\mathrm{OVR}}(f)$ derived by the previous study \cite{zhang2020unbiased}.
\end{corollary}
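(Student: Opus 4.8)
The plan is to proceed by direct substitution: I would plug the one-versus-rest loss $\mathcal{L}_{\mathrm{OVR}}$ into the generalized risk $R_{\mathrm{LAC}}(f)$ from Theorem~\ref{thm1} and simplify until it matches $R_{\mathrm{LAC}}^{\mathrm{OVR}}(f)$ term by term. The key preliminary observation is that the test label space is $\mathcal{Y}=\{1,\ldots,k,\mathtt{ac}\}$, so the OVR loss must be instantiated over all $k+1$ classes rather than over $\mathcal{Y}'$. For a known-class label $y\in\{1,\ldots,k\}$ this gives
\begin{align}
\nonumber
\mathcal{L}_{\mathrm{OVR}}(f(\bm{x}),y) = \psi(f_y(\bm{x})) + \sum\nolimits_{i=1,i\neq y}^k\psi(-f_i(\bm{x})) + \psi(-f_{\mathtt{ac}}(\bm{x})),
\end{align}
while for the augmented class it gives $\mathcal{L}_{\mathrm{OVR}}(f(\bm{x}),\mathtt{ac}) = \psi(f_{\mathtt{ac}}(\bm{x})) + \sum\nolimits_{i=1}^k\psi(-f_i(\bm{x}))$.

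Second, I would handle the $\theta$-weighted known-class expectation by forming the difference $\mathcal{L}_{\mathrm{OVR}}(f(\bm{x}),y)-\mathcal{L}_{\mathrm{OVR}}(f(\bm{x}),\mathtt{ac})$. The essential simplification is the cancellation of the negated scores: since $y\in\{1,\ldots,k\}$, we have $\sum_{i=1,i\neq y}^k\psi(-f_i(\bm{x}))-\sum_{i=1}^k\psi(-f_i(\bm{x}))=-\psi(-f_y(\bm{x}))$. After this cancellation the difference collapses to $\psi(f_y(\bm{x}))-\psi(-f_y(\bm{x}))+\psi(-f_{\mathtt{ac}}(\bm{x}))-\psi(f_{\mathtt{ac}}(\bm{x}))$, which is exactly the integrand of the first expectation in $R_{\mathrm{LAC}}^{\mathrm{OVR}}(f)$.

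Third, for the test-distribution expectation I would simply substitute $\mathcal{L}_{\mathrm{OVR}}(f(\bm{x}),\mathtt{ac})=\psi(f_{\mathtt{ac}}(\bm{x}))+\sum_{i=1}^k\psi(-f_i(\bm{x}))$, which matches the second expectation in $R_{\mathrm{LAC}}^{\mathrm{OVR}}(f)$ verbatim. Combining the two terms then yields $R_{\mathrm{LAC}}(f)=R_{\mathrm{LAC}}^{\mathrm{OVR}}(f)$, completing the proof. I do not anticipate a genuine obstacle here, since the argument is purely algebraic; the only point requiring care is being explicit that the OVR loss is instantiated over the full $(k+1)$-class label space $\mathcal{Y}$, so that the extra $\psi(\pm f_{\mathtt{ac}})$ terms appear and the index cancellation is bookkept correctly.
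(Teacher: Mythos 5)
Your proposal is correct and is exactly the direct-substitution argument the paper has in mind (the paper omits the proof, noting only that it follows by "directly inserting $\mathcal{L}_{\mathrm{OVR}}$ into $R_{\mathrm{LAC}}(f)$"). Your explicit bookkeeping of the $(k+1)$-class instantiation and the cancellation $\sum_{i\neq y}\psi(-f_i)-\sum_{i=1}^k\psi(-f_i)=-\psi(-f_y)$ is precisely the verification the authors leave to the reader.
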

The proof of Corollary \ref{coro1} is omitted here, since it is quite straightforward
to verify that by directly inserting $\mathcal{L}_{\mathrm{OVR}}$ into $R_{\mathrm{LAC}}(f)$. Corollary \ref{coro1} shows that our proposed URE is a generalization of the URE proposed by Zhang et al. \cite{zhang2020unbiased} and can be compatible with arbitrary loss functions.

Given a set of labeled data $\mathcal{D}_{\mathrm{L}}=\{\bm{x}_i,y_i\}_{i=1}^{n}$ drawn from the distribution of known classes $P_{\mathrm{kc}}$ and a set of unlabeled data $\mathcal{D}_{\mathrm{U}}=\{\bm{x}_i\}_{i=1}^m$ drawn from the test distribution $P_\mathrm{te}$, we can obtain the following URE, which is the empirical approximation of ${R}_{\mathrm{LAC}}$:
\begin{align}
\label{LAC_empirical risk}
\widehat{R}_{\mathrm{LAC}}(f) 
&= \frac{\theta}{n}\sum\nolimits_{i=1}^n\Big(\mathcal{L}(f(\boldsymbol{x}_i),y_i)-\mathcal{L}(f(\boldsymbol{x}_i),\mathtt{ac})\Big) 
 + \frac{1}{m}\sum\nolimits_{j=1}^m\mathcal{L}(f(\boldsymbol{x}_j),\mathtt{ac}).
\end{align}
In this way, we are able to learn an effective multi-class classifier by directly minimizing $\widehat{R}_{\mathrm{LAC}}(f)$. Since there are no restrictions on the loss function $\mathcal{L}$ and the model $f$, we can use any loss and any model for LAC with unlabeled data.
\subsection{Theoretical Analysis}
Here, we establish an estimation error bound for our proposed URE based on the widely used \emph{Rademacher complexity} \cite{bartlett2002rademacher}.
\begin{definition}[Rademacher Complexity]
Let $n$ be a positive integer, $\bm{x}_1,\ldots,\bm{x}_n$ be independent and identically distributed random variables drawn from a probability distribution with density $\mu$, $\mathcal{G}={g:\mathcal{X}\mapsto\mathbb{R}}$ be a class of measurable functions, and $\boldsymbol{\sigma}=(\sigma_1,\ldots,\sigma_n)$ be Rademacher variables that take value from $\{+1,-1\}$ with even probabilities. Then, the Rademacher complexity of $\mathcal{G}$ is defined as
\begin{gather}
\nonumber
\mathfrak{R}_{n}(\mathcal{G}):=\mathbb{E}_{\bm{x}_1,\ldots,\bm{x}_n\overset{\mathrm{i.i.d.}}{\sim}\mu}\mathbb{E}_{\boldsymbol{\sigma}}\Big[\sup\nolimits_{g\in\mathcal{G}}\frac{1}{n}\sum\nolimits_{i=1}^n\sigma_i g(\bm{x}_i)\Big].
\end{gather}
\end{definition}
Let us represent $\mathcal{F}$ by $\mathcal{F}=\mathcal{F}_1\times\mathcal{F}_2\cdots\mathcal{F}_{k+1}$ where $\mathcal{F}_i=\{\bm{x}\mapsto f_i(\bm{x}) \mid f\in\mathcal{F}\}$ and $\mathcal{F}_{k+1}$ is specially for the augmented class $\mathtt{ac}$. Thus we can denote by $\mathfrak{R}_{n}(\mathcal{F}_y)$ the Rademacher complexity of $\mathcal{F}_y$ for the $y$-th class, given the sample size $n$ over the test distribution $P_{\mathrm{te}}$. It is commonly assumed that $\mathfrak{R}_{n}(\mathcal{F}_y)\leq C_{\mathcal{F}}/\sqrt{n}$ for all $y\in\mathcal{Y}$ \cite{bao2018classification}, where $C_{\mathcal{F}}$ is a positive constant. Let the learned classifier by our URE be $\widehat{f}=\argmin_{f\in\mathcal{F}}\widehat{R}_{\mathrm{LAC}}(f)$ and the optimal classifier learned by minimizing the true classification risk be $f^\star=\argmin_{f\in\mathcal{F}}R(f)$. Then we have the following theorem.
\begin{theo}
\label{thm2}
Assume the multi-class loss $\mathcal{L}(f(\bm{x}),y)$ is $\rho$-Lipschitz ($0<\rho<\infty$) with respect to $f(\bm{x})$ for all $y\in\mathcal{Y}$ and upper bounded by a positive constant $C_{\mathcal{L}}$, i.e., $C_{\mathcal{L}} = \sup_{\bm{x}\in\mathcal{X},y\in\mathcal{Y},f\in\mathcal{F}}\mathcal{L}(f(\bm{x}),y)$. Then, for any $\delta>0$, with probability at least $1-\delta$,
\begin{align}
\nonumber
R(\widehat{f})-R(f^\star) &\leq C_{k,\rho,\delta}(\frac{2\theta}{\sqrt{n}}+\frac{1}{\sqrt{m}}),
\end{align}
where $C_{k,\rho,\delta} = (4\sqrt{2}\rho(k+1)C_{\mathcal{F}}+2C_{\mathcal{L}}\sqrt{\frac{\log\frac{4}{\delta}}{2}})$.
\end{theo}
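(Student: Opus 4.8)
The plan is to follow the standard uniform-deviation route, exploiting the fact that by Theorem~\ref{thm1} we have $R=R_{\mathrm{LAC}}$, so the population target and the object whose empirical version is minimized coincide. First I would reduce the excess risk to a uniform deviation. Writing
\begin{align*}
R(\widehat{f}) - R(f^\star) &= \big(R_{\mathrm{LAC}}(\widehat f)-\widehat R_{\mathrm{LAC}}(\widehat f)\big)+\big(\widehat R_{\mathrm{LAC}}(\widehat f)-\widehat R_{\mathrm{LAC}}(f^\star)\big)+\big(\widehat R_{\mathrm{LAC}}(f^\star)-R_{\mathrm{LAC}}(f^\star)\big),
\end{align*}
the middle bracket is $\le 0$ because $\widehat f$ minimizes $\widehat R_{\mathrm{LAC}}$, so $R(\widehat f)-R(f^\star)\le 2\sup_{f\in\mathcal F}\lvert R_{\mathrm{LAC}}(f)-\widehat R_{\mathrm{LAC}}(f)\rvert$. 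Since $R_{\mathrm{LAC}}$ is a sum of a labeled-data term (an average over the $n$ i.i.d.\ samples from $P_{\mathrm{kc}}$) and an unlabeled-data term (an average over the $m$ i.i.d.\ samples from $P_{\mathrm{te}}$), I would split this supremum as $\Phi_{\mathrm L}+\Phi_{\mathrm U}$ and treat each separately; this is exactly what produces the additive $2\theta/\sqrt n$ and $1/\sqrt m$ structure.

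Second, I would apply McDiarmid's bounded-difference inequality to each of $\Phi_{\mathrm L}$ and $\Phi_{\mathrm U}$. Since $\mathcal L$ is bounded by $C_{\mathcal L}$, altering one labeled example perturbs $\Phi_{\mathrm L}$ by at most $2\theta C_{\mathcal L}/n$ (the factor $2$ coming from the difference $\mathcal L(f(\bm x),y)-\mathcal L(f(\bm x),\mathtt{ac})$) and altering one unlabeled example perturbs $\Phi_{\mathrm U}$ by at most $C_{\mathcal L}/m$. Splitting the confidence budget so each of the two supremum terms uses $\delta/2$, McDiarmid gives with probability at least $1-\delta$ a tail contribution of the form $2C_{\mathcal L}\sqrt{\log(4/\delta)/2}$ multiplying $\theta/\sqrt n$ and $1/\sqrt m$ respectively, matching the second summand of $C_{k,\rho,\delta}$.

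Third, I would bound the expectations $\mathbb E[\Phi_{\mathrm L}]$ and $\mathbb E[\Phi_{\mathrm U}]$ by symmetrization, turning each into twice the Rademacher complexity of the loss-composed function class, and then peel off the loss. This is where the main obstacle lies: the loss $\mathcal L(f(\bm x),y)$ depends on the entire vector-valued output $f(\bm x)=(f_1(\bm x),\dots,f_{k+1}(\bm x))$, so the ordinary scalar Talagrand contraction lemma does not apply directly. I would instead invoke a vector-valued contraction lemma for $\rho$-Lipschitz functions, which introduces the factor $\sqrt 2$ and replaces the loss-composed complexity by $\rho\sum_{y=1}^{k+1}\mathfrak R_n(\mathcal F_y)$ over the $k+1$ coordinate classes. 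Getting the constants and the counting right here — the two loss terms in the labeled part, the $k+1$ coordinates, and the symmetrization factor — is the delicate bookkeeping that yields the coefficient $4\sqrt 2\,\rho(k+1)C_{\mathcal F}$.

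Finally, I would substitute the assumed uniform bounds $\mathfrak R_n(\mathcal F_y)\le C_{\mathcal F}/\sqrt n$ and $\mathfrak R_m(\mathcal F_y)\le C_{\mathcal F}/\sqrt m$ into the contraction estimates, add the labeled ($\propto 2\theta/\sqrt n$) and unlabeled ($\propto 1/\sqrt m$) contributions together with the McDiarmid tail terms, and collect constants to read off $C_{k,\rho,\delta}=4\sqrt2\,\rho(k+1)C_{\mathcal F}+2C_{\mathcal L}\sqrt{\log(4/\delta)/2}$. Factoring out the common confidence-dependent constant then gives the claimed bound $R(\widehat f)-R(f^\star)\le C_{k,\rho,\delta}\,(2\theta/\sqrt n+1/\sqrt m)$.
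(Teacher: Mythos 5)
Your proposal is correct and follows essentially the same route as the paper's proof: the same excess-risk-to-uniform-deviation reduction with the doubling factor, the same split into a labeled ($P_{\mathrm{kc}}$) term and an unlabeled ($P_{\mathrm{te}}$) term, McDiarmid with a $\delta/2$ budget on each, symmetrization, and the vector-valued Rademacher contraction inequality contributing the $\sqrt{2}\rho(k+1)$ factor. The paper packages the two deviation bounds as separate lemmas (Lemmas \ref{lem_kac} and \ref{lem_tac}), but the substance and the resulting constant $C_{k,\rho,\delta}$ are identical to what you describe.
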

The proof of Theorem \ref{thm2} is provided in Appendix \ref{proof_theorem_2}. 
Theorem \ref{thm2} shows that our proposed method is consistent, i.e., $R(\widehat{f})\rightarrow R(f^\star)$ as $n\rightarrow\infty$ and $m\rightarrow\infty$. The convergence rate is $\mathcal{O}_p(1/\sqrt{n}+1/\sqrt{m})$, where $\mathcal{O}_p$ denotes the order in probability. This order is the optimal parametric rate for
the empirical risk minimization without additional assumptions (Mendelson, 2008).
\subsection{Risk-Penalty Regularization}
As shown in Eq.~(\ref{LAC_empirical risk}), there exist negative terms in the URE $\widehat{R}_{\mathrm{LAC}}(f)$. Since the widely used classification loss (i.e., the cross entropy loss) is unbounded above, $\widehat{R}_{\mathrm{LAC}}(f)$ could be unbounded below, which causes the overfitting issue, and this issue becomes especially terrible when deep models are employed. Such an issue was encountered and demonstrated by many previous studies \cite{kiryo2017positive,lu2020mitigating,zhang2020unbiased,feng2021pointwise}. To alleviate this issue, these studies normally wrapped the terms that could make empirical risk negative by certain correction
functions, such as the \emph{rectified linear unit} (ReLU) function $g(z) = \max(0, z)$ and the \emph{absolute value} (ABS) function $g(z) = |z|$. Despite the effectiveness of the correction functions used by previous studies \cite{lu2020mitigating}, using a correction function could make the empirical risk not unbiased anymore. Motivated by this, we aim to keep the original URE unchanged and impose a regularization term to alleviate the issue of negative empirical risk.

We note that the above issue in our work comes from the use of equality: $(1-\theta)\mathbb{E}_{(\bm{x},y=\mathtt{ac})\sim P_{\mathrm{ac}}}[\mathcal{L}(f(\bm{x}),\mathtt{ac})]
=\mathbb{E}_{(\bm{x},y)\sim P_{\mathrm{te}}}[\mathcal{L}(f(\bm{x}),\mathtt{ac})]-\theta\mathbb{E}_{(\bm{x},y)\sim P_{\mathrm{kc}}}[\mathcal{L}(f(\bm{x}),\mathtt{ac})]$. 
Let us denote by $\widehat{R}_{\mathrm{PAC}}$ the empirical version of the expected risk on the left hand side of the above equality:
\begin{gather}
\nonumber
\widehat{R}_{\mathrm{PAC}}(f) =  \frac{1}{m}\sum_{j=1}^m\mathcal{L}(f(\boldsymbol{x}_j),\mathtt{ac}) - \frac{\theta}{n}\sum_{i=1}^n \mathcal{L}(f(\boldsymbol{x}_i),\mathtt{ac}).
\end{gather}
Our goal is to prevent the whole training objective from going to be unbounded below in the training process. Therefore, we choose to add an regularization term on the training objective, which could incur a cost when $\widehat{R}_{\mathrm{PAC}}(f)$ goes to be negative. In this way, the whole training objective would not be unbounded below, because there is always a positive cost incurred if $\widehat{R}_{\mathrm{PAC}}(f)$ goes to be negative.
Our proposed regularization term is presented as follows:
\begin{align}
\Omega(f) = \left\{\begin{matrix}
 (-\widehat{R}_{\mathrm{PAC}}(f))^{t}, &\mathrm{if}\ \widehat{R}_{\mathrm{PAC}}(f)<0, \\
0, &\mathrm{otherwise}, \\
\end{matrix}\right.
\end{align}
where $t\geq 0$ is a hyper-parameter. We plot the above regularization function in Figure \ref{fig1}. Using this regularization, our final training objective becomes $\widehat{R}_{\mathrm{LAC}}(f) + \lambda \Omega(f)$, where $\widehat{R}_{\mathrm{LAC}}(f)$ is our derived URE defined in Eq.~(\ref{LAC_empirical risk}) and $\lambda$ is a hyper-parameter that controls the importance of the regularization term. 
As can be seen from Figure 1, when the empirical risk becomes negative, an extra loss will be generated, thereby alleviating the issue of the negative empirical risk during training. 
Additionally, the widely used correction functions ReLU and ABS can be considered as spacial cases of our proposed negative risk penalty regularization. Specifically, when $t=1$ and $\lambda=1$, we obtain a formulation by using ReLU as the correction function. When $t=1$ and $\lambda=2$, we obtain a formulation by using ABS as the correction function. Therefore, our proposed risk-penalty regularization could be considered as a generalized solution to the issue of negative empirical risk.
\begin{figure}[!t]
    \centering
    \includegraphics[width=2.4in]{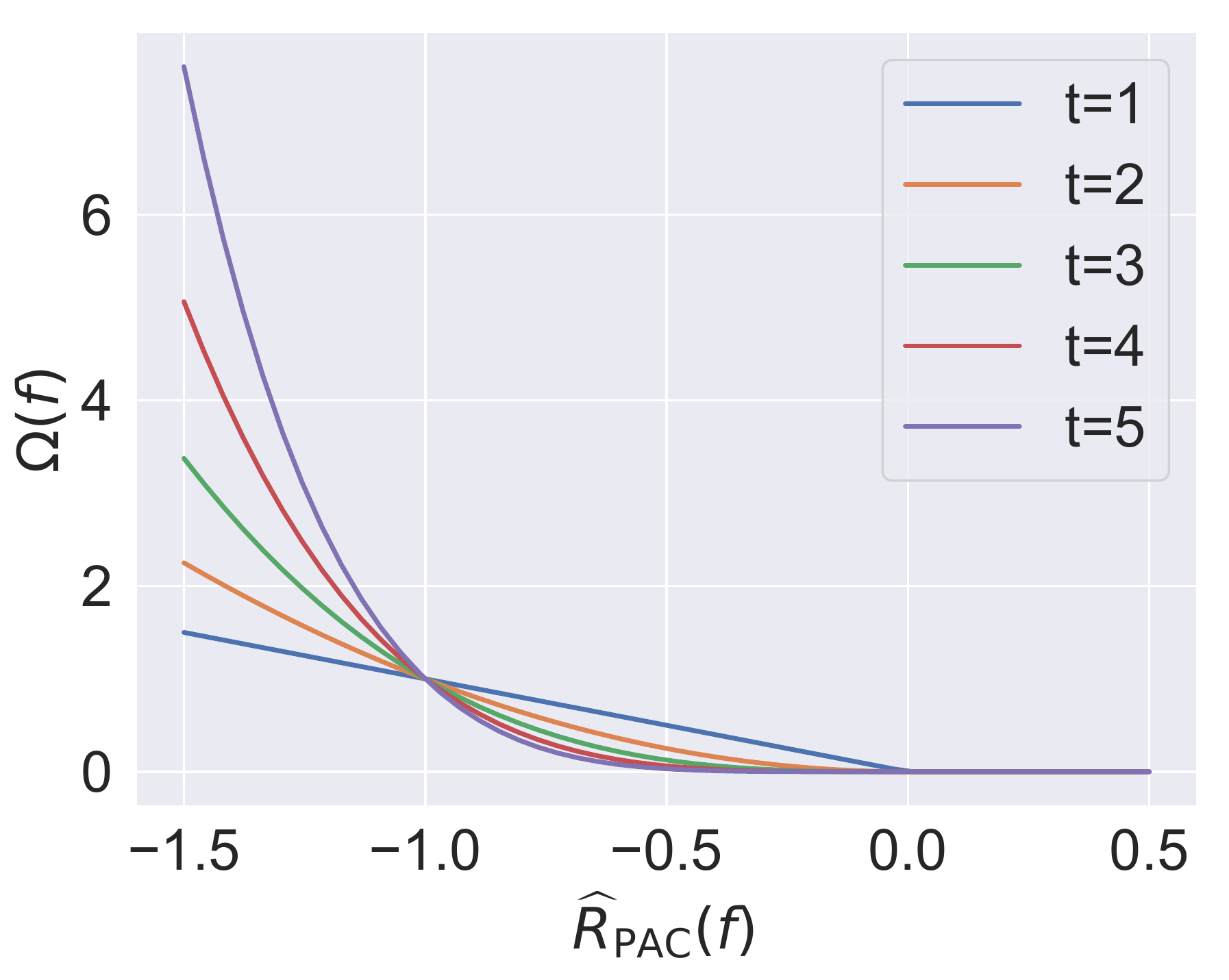}
    \caption{The function curve of our proposed risk-penalty regularization.}
    \label{fig1}
\end{figure}

\subsection{Handling Class Prior Shift}
The class shift condition in Eq.~(\ref{data_distribution}) describes the relationship between the distributions of known classes and augmented classes, with the assumption that the distribution of known classes keeps identical in both the training phase and the test phase. However, the learning environment could dynamically change in real-world scenarios, and thus the prior of each class could also vary from the training phase and the test phase. As shown by Zhang et al. \cite{zhang2020unbiased}, the following generalized class shift condition can be obtained:
\begin{gather}
\label{generalized_data_distribution}
P_{\mathrm{te}} = \sum\nolimits_{i=1}^k\theta_{\mathrm{te}}^i\cdot P_{i} + (1-\sum\nolimits_{i=1}^k\theta_{\mathrm{te}}^i)\cdot P_{\mathrm{ac}},
\end{gather}
where $P_i$ denotes the data distribution of the $i$-th known class and $\theta_{\mathrm{te}}^i$ denotes the class prior of the $i$-th known class in the test phase. Given the generalized class shift condition in Eq.~(\ref{generalized_data_distribution}), we can still obtain an equivalent representation of the classification risk $R(f)$.
\begin{theo}
\label{thm3}
Under the generalized class shift condition in Eq.~(\ref{generalized_data_distribution}), the classification risk $R(f)$ can be equivalently expressed as follows:
\begin{align}
\label{generalized_LAC_expected_risk}
&R_{\mathrm{LAC}}^{\mathrm{shift}}(f) 
= \sum\nolimits_{i=1}^k\theta_{\mathrm{te}}^i\cdot\mathbb{E}_{(\bm{x},y)\sim P_{i}}[\mathcal{L}(f(\boldsymbol{x}),y)
-\mathcal{L}(f(\boldsymbol{x}),\mathtt{ac})] + \mathbb{E}_{\bm{x}\sim P_{\mathrm{te}}}[\mathcal{L}(f(\boldsymbol{x}),\mathtt{ac})]=R(f).
\end{align}
\end{theo}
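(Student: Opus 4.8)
The plan is to mirror the argument behind Theorem~\ref{thm1}, of which this statement is the per-class generalization: we simply replace the single mixing weight $\theta$ and the aggregate distribution $P_{\mathrm{kc}}$ by the collection of per-class weights $\theta_{\mathrm{te}}^i$ and the individual class distributions $P_i$. First I would start from the definition of the true risk, $R(f)=\mathbb{E}_{(\bm{x},y)\sim P_{\mathrm{te}}}[\mathcal{L}(f(\bm{x}),y)]$, and split the expectation over the mixture in Eq.~(\ref{generalized_data_distribution}). Since the test label space is $\{1,\ldots,k,\mathtt{ac}\}$ and a sample drawn from the $i$-th component carries label $y=i$ (resp. $y=\mathtt{ac}$ for the augmented component), this decomposition gives
\begin{gather}
\nonumber
R(f)=\sum\nolimits_{i=1}^k\theta_{\mathrm{te}}^i\,\mathbb{E}_{(\bm{x},y)\sim P_i}[\mathcal{L}(f(\bm{x}),y)] + \Big(1-\sum\nolimits_{i=1}^k\theta_{\mathrm{te}}^i\Big)\mathbb{E}_{\bm{x}\sim P_{\mathrm{ac}}}[\mathcal{L}(f(\bm{x}),\mathtt{ac})].
\end{gather}
The only term that cannot be estimated directly is the last one, because we never observe samples from $P_{\mathrm{ac}}$.

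The key step is to express this unobservable augmented-class term through quantities we can sample from, exactly as in the proof of Theorem~\ref{thm1}. Rearranging the generalized class shift condition in Eq.~(\ref{generalized_data_distribution}) yields the signed-measure identity $\big(1-\sum_{i=1}^k\theta_{\mathrm{te}}^i\big)P_{\mathrm{ac}}=P_{\mathrm{te}}-\sum_{i=1}^k\theta_{\mathrm{te}}^i P_i$. Integrating the fixed function $\bm{x}\mapsto\mathcal{L}(f(\bm{x}),\mathtt{ac})$ against both sides and using linearity of the integral then gives
\begin{gather}
\nonumber
\Big(1-\sum\nolimits_{i=1}^k\theta_{\mathrm{te}}^i\Big)\mathbb{E}_{\bm{x}\sim P_{\mathrm{ac}}}[\mathcal{L}(f(\bm{x}),\mathtt{ac})] = \mathbb{E}_{\bm{x}\sim P_{\mathrm{te}}}[\mathcal{L}(f(\bm{x}),\mathtt{ac})] - \sum\nolimits_{i=1}^k\theta_{\mathrm{te}}^i\,\mathbb{E}_{\bm{x}\sim P_i}[\mathcal{L}(f(\bm{x}),\mathtt{ac})].
\end{gather}
This is the heart of the argument: it trades the inaccessible expectation over $P_{\mathrm{ac}}$ for one expectation over the samplable test distribution plus a correction built from the known-class distributions.

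Finally I would substitute this identity back into the decomposition of $R(f)$ and regroup the two sums over $i$ into a single expectation of the difference $\mathcal{L}(f(\bm{x}),y)-\mathcal{L}(f(\bm{x}),\mathtt{ac})$ under each $P_i$, leaving the $\mathbb{E}_{\bm{x}\sim P_{\mathrm{te}}}[\mathcal{L}(f(\bm{x}),\mathtt{ac})]$ term intact; this produces exactly $R_{\mathrm{LAC}}^{\mathrm{shift}}(f)$ as written in Eq.~(\ref{generalized_LAC_expected_risk}), and because every manipulation is an equality we conclude $R_{\mathrm{LAC}}^{\mathrm{shift}}(f)=R(f)$. I do not expect a genuine obstacle, as the derivation is a routine linearity-of-expectation computation; the only point requiring care is the bookkeeping of which label attaches to which mixture component, together with the observation that the rearranged class shift condition may be pulled out only because the integrand $\mathcal{L}(f(\bm{x}),\mathtt{ac})$ is the same fixed function on every component (it does not depend on the component label), which is precisely why only the augmented-class term gets rewritten. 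As a sanity check, setting $\theta_{\mathrm{te}}^i=\theta\,p_{\mathrm{kc}}(y=i)$ for all $i$ (so that $\sum_i\theta_{\mathrm{te}}^i=\theta$ and $\sum_i\theta_{\mathrm{te}}^i P_i=\theta P_{\mathrm{kc}}$) collapses the statement back to Theorem~\ref{thm1}.
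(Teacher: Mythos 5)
Your proposal is correct and is precisely the argument the paper intends: the paper omits the proof of Theorem~\ref{thm3} with the remark that it follows the proof of Theorem~\ref{thm1}, and your derivation---decomposing $R(f)$ over the generalized mixture and using the rearranged identity $\big(1-\sum_{i}\theta_{\mathrm{te}}^i\big)P_{\mathrm{ac}}=P_{\mathrm{te}}-\sum_{i}\theta_{\mathrm{te}}^i P_i$ to rewrite the inaccessible augmented-class expectation---is exactly that per-class adaptation. The bookkeeping of labels across mixture components and the sanity check recovering Theorem~\ref{thm1} are both sound, so nothing further is needed.
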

We omitted the proof of Theorem \ref{thm3}, since it is quite straightforward by following the proof of Theorem \ref{thm1}. Theorem \ref{thm3} shows that we can learn a classifier for LAC with unlabeled data under the class prior shift condition of known classes by empirically minimizing $R_{\mathrm{LAC}}^{\mathrm{shift}}(f)$.
\section{Experiments}
In this section, we conduct extensive experiments to evaluate the performance of our proposed method on various datasets using different models.
\subsection{Expermental Setup}
\paragraph{Datasets.}
We use six regular-scale datasets downloaded from the UCI Machine Learning Repository\footnote{\url{https://archive.ics.uci.edu}} \cite{dua2017uci}, including Har, Msplice, Normal, Optdigits, Texture, and Usps. Since they are not large-scale datasets, we train a linear model on these datasets. We also use four widely large-scale benchmark datasets, including MNIST\footnote{\url{http://yann.lecun.com/exdb/mnist/}} \cite{lecun1998gradient}, Fashion-MNIST\footnote{\url{https://github.com/zalandoresearch/fashion-mnist}} \cite{xiao2017fashion}, Kuzushiji-MNIST\footnote{http://codh.rois.ac.jp/kmnist/} \cite{clanuwat2018deep}, and SVHN\footnote{\url{http://ufldl.stanford.edu/housenumbers/}} \cite{netzer2011reading}. For MINST, Fashion-MNIST, and Kuzushiji-MNIST, we train a multilayer perceptron (MLP) model with three layers ($d-500-k$) and the ReLU activation function is used.
For the SVHN dataset, we train a VGG16 model \cite{simonyan2015}. The brief characteristics of all the used datasets are reported in Table \ref{dataset}. For each regular-scale dataset, half of the classes are selected as augmented classes and the remaining classes are considered as known classes. Besides, the number of labeled, unlabeled, and test examples set to 500, 1000, and 1000, respectively. For large-scale datasets, we select six classes as known classes and other classes are regarded as augmented classes. For MNIST, Fashion-MNIST, and Kuzushiji-MNIST, the number of the labeled, unlabeled, and test examples is set to 24000 (4000 per known class), 10000 (1000 per class), and 1000 (100 per class), respectively. For SVHN, the number of the labeled, unlabeled, and test examples is set to 24000 (4000 per known class), 25000 (2500 per class), and 1000 (100 per class), respectively. To comprehensively evaluate the performance of our proposed method, we report the mean value with standard deviation over 10 trials, in terms of the evaluation metrics including accuracy, Macro-F1, and AUC.
\begin{table}[!t]
	\centering
	\resizebox{0.6\textwidth}{!}{
		\setlength{\tabcolsep}{5.5mm}{
			\begin{tabular}{cccc}
				\toprule
				Dataset   &\# Examples & \# Features  &\# Classes \\
				\midrule
				Har &10,699  &561  &6 \\
				 Msplice &3,175  &240  &3\\
				 Normal &7,000  &2  &7 \\
					Optdigits &5,620   &62  &10   \\
					Texture &5,500   &40  &11   \\
					Usps &9,298 &256 &10  \\
				\midrule
					MNIST & 70,000  & 784 & 10  \\
					Fashion & 70,000  & 784 & 10  \\
					Kuzushiji & 70,000  & 784 & 10  \\
					SVHN & 99,268  & 3,072 & 10  \\
				\bottomrule
			\end{tabular}
		}
	}
 \caption{Brief characteristics of the used datasets.}
\label{dataset}
\end{table}
\begin{table*}[!t]
    \centering
    \resizebox{1.00\textwidth}{!}{
		\setlength{\tabcolsep}{6.0mm}{\begin{tabular}{c|cccccc}
    \toprule
    \multirow{2}{*}{Datasets} & \multicolumn{6}{c}{Accuracy} \\ \cmidrule(r){2-7}
 &OVR &EVM &LACU &iForest &EULAC &NRPR \\
\midrule
Har	&0.452$\pm$0.012 	&0.413$\pm$0.007 	&0.542$\pm$0.077 	&0.958$\pm$0.005 	&0.967$\pm$0.009 	&\textbf{0.986$\pm$0.007} 	\\
Msplice	&0.725$\pm$0.008 	&0.658$\pm$0.005 	&0.664$\pm$0.015 	&0.716$\pm$0.007 	&0.781$\pm$0.032 	&\textbf{0.785$\pm$0.020} 	\\
Normal	&0.570$\pm$0.001 	&0.567$\pm$0.003 	&0.567$\pm$0.050 	&0.775$\pm$0.036 	&0.755$\pm$0.051 	&\textbf{0.851$\pm$0.009} 	\\
Optdigits	&0.533$\pm$0.014 	&0.460$\pm$0.009 	&0.724$\pm$0.068 	&0.728$\pm$0.020 	&0.791$\pm$0.047 	&\textbf{0.863$\pm$0.037} 	\\
Texture	&0.570$\pm$0.010 	&0.474$\pm$0.018 	&0.520$\pm$0.062 	&0.601$\pm$0.029 	&0.732$\pm$0.109 	&\textbf{0.930$\pm$0.009} 	\\
Usps	&0.631$\pm$0.026 	&0.547$\pm$0.007 	&0.572$\pm$0.057 	&0.551$\pm$0.005 	&0.874$\pm$0.023 	&\textbf{0.878$\pm$0.012} 	\\
\midrule
\multirow{2}{*}{Datasets} & \multicolumn{6}{c}{Macro-F1} \\ \cmidrule(r){2-7}
 &OVR &EVM &LACU &iForest &EULAC &NRPR \\
\midrule
Har	&0.526$\pm$0.014 	&0.530$\pm$0.008 	&0.277$\pm$0.020 	&0.950$\pm$0.006 	&0.950$\pm$0.011 	&\textbf{0.980$\pm$0.009} 	\\
Msplice	&0.549$\pm$0.007 	&0.490$\pm$0.005 	&0.537$\pm$0.049 	&0.537$\pm$0.007 	&0.759$\pm$0.034 	&\textbf{0.767$\pm$0.021} 	\\
Normal	&0.679$\pm$0.001 	&0.658$\pm$0.028 	&0.405$\pm$0.221 	&0.747$\pm$0.050 	&0.586$\pm$0.103 	&\textbf{0.762$\pm$0.013} 	\\
Optdigits	&0.606$\pm$0.012 	&0.572$\pm$0.014 	&0.745$\pm$0.069 	&0.708$\pm$0.025 	&0.690$\pm$0.103 	&\textbf{0.859$\pm$0.026} 	\\
Texture	&0.709$\pm$0.008 	&0.561$\pm$0.026 	&0.294$\pm$0.194 	&0.696$\pm$0.019 	&0.540$\pm$0.192 	&\textbf{0.927$\pm$0.011} 	\\
Usps	&0.667$\pm$0.026 	&0.579$\pm$0.008 	&0.370$\pm$0.111 	&0.590$\pm$0.010 	&0.853$\pm$0.038 	&\textbf{0.867$\pm$0.012} 	\\
\midrule
\multirow{2}{*}{Datasets} & \multicolumn{6}{c}{AUC} \\ \cmidrule(r){2-7}
 &OVR &EVM &LACU &iForest &EULAC &NRPR \\
\midrule
Har	&0.789$\pm$0.006 	&0.754$\pm$0.007 	&0.695$\pm$0.051 	&0.953$\pm$0.005 	&0.998$\pm$0.001 	&\textbf{0.999$\pm$0.001} 	\\
Msplice	&0.737$\pm$0.007 	&0.652$\pm$0.006 	&0.748$\pm$0.011 	&0.733$\pm$0.006 	&0.924$\pm$0.013 	&\textbf{0.917$\pm$0.010} 	\\
Normal	&0.849$\pm$0.001 	&0.847$\pm$0.002 	&0.748$\pm$0.029 	&0.803$\pm$0.031 	&0.976$\pm$0.026 	&\textbf{0.999$\pm$0.000} 	\\
Optdigits	&0.866$\pm$0.005 	&0.836$\pm$0.008 	&0.834$\pm$0.041 	&0.798$\pm$0.021 	&0.907$\pm$0.053 	&\textbf{0.987$\pm$0.002} 	\\
Texture	&0.894$\pm$0.004 	&0.831$\pm$0.016 	&0.720$\pm$0.036 	&0.828$\pm$0.012 	&0.756$\pm$0.126 	&\textbf{0.995$\pm$0.001} 	\\
Usps	&0.870$\pm$0.007 	&0.846$\pm$0.007 	&0.743$\pm$0.034 	&0.843$\pm$0.004 	&0.979$\pm$0.005 	&\textbf{0.981$\pm$0.002} 	\\
\bottomrule
    \end{tabular}}}
    \caption{Test performance (mean$\pm$std) of each method on UCI datasets. The best performance is highlighted in bold.}
    \label{uci_results}
\end{table*}
\begin{table*}[!t]
    \centering
    \resizebox{1.00\textwidth}{!}{
		\setlength{\tabcolsep}{6.0mm}{\begin{tabular}{c|cccccc}
    \toprule
    \multirow{2}{*}{Datasets} & \multicolumn{6}{c}{Accuracy} \\ \cmidrule(r){2-7}
 &OVR &Softmax &Softmax-T &Openmax &EULAC &NRPR \\
\midrule
MNIST	&0.743$\pm$0.018 	&0.592$\pm$0.002 	&0.883$\pm$0.009 	&0.882$\pm$0.007 	&0.952$\pm$0.006 	&\textbf{0.960$\pm$0.006} 	\\
Fashion	&0.587$\pm$0.009 	&0.559$\pm$0.007 	&0.595$\pm$0.011 	&0.601$\pm$0.011 	&0.840$\pm$0.034 	&\textbf{0.875$\pm$0.010} 	\\
Kuzushiji	&0.696$\pm$0.017 	&0.578$\pm$0.004 	&0.828$\pm$0.009 	&0.835$\pm$0.012 	&0.881$\pm$0.016 	&\textbf{0.927$\pm$0.009} 	\\
SVHN	&0.595$\pm$0.022 	&0.564$\pm$0.006 	&0.720$\pm$0.037 	&0.786$\pm$0.013 	&0.835$\pm$0.021 	&\textbf{0.873$\pm$0.012} 	\\
\midrule
\multirow{2}{*}{Datasets} & \multicolumn{6}{c}{Macro-F1} \\ \cmidrule(r){2-7}
 &OVR &Softmax &Softmax-T &Openmax &EULAC &NRPR \\
\midrule
MNIST	&0.794$\pm$0.015 	&0.653$\pm$0.004 	&0.898$\pm$0.007 	&0.898$\pm$0.006 	&0.951$\pm$0.007 	&\textbf{0.960$\pm$0.006} 	\\
Fashion	&0.667$\pm$0.010 	&0.637$\pm$0.007 	&0.685$\pm$0.009 	&0.688$\pm$0.008 	&0.813$\pm$0.070 	&\textbf{0.872$\pm$0.010} 	\\
Kuzushiji	&0.746$\pm$0.014 	&0.630$\pm$0.005 	&0.847$\pm$0.009 	&0.851$\pm$0.011 	&0.877$\pm$0.021 	&\textbf{0.926$\pm$0.008} 	\\
SVHN	&0.640$\pm$0.020 	&0.614$\pm$0.007 	&0.760$\pm$0.029 	&0.801$\pm$0.014 	&0.831$\pm$0.021 	&\textbf{0.876$\pm$0.011} 	\\
\midrule
\multirow{2}{*}{Datasets} & \multicolumn{6}{c}{AUC} \\ \cmidrule(r){2-7}
 &OVR &Softmax &Softmax-T &Openmax &EULAC &NRPR \\
\midrule
MNIST	&0.929$\pm$0.005 	&0.890$\pm$0.002 	&0.939$\pm$0.005 	&0.939$\pm$0.005 	&0.996$\pm$0.002 	&\textbf{0.998$\pm$0.001} 	\\
Fashion	&0.868$\pm$0.004 	&0.864$\pm$0.005 	&0.822$\pm$0.005 	&0.820$\pm$0.004 	&0.953$\pm$0.039 	&\textbf{0.985$\pm$0.002} 	\\
Kuzushiji	&0.908$\pm$0.005 	&0.880$\pm$0.003 	&0.909$\pm$0.008 	&0.905$\pm$0.008 	&0.981$\pm$0.009 	&\textbf{0.994$\pm$0.001} 	\\
SVHN	&0.835$\pm$0.011 	&0.868$\pm$0.005 	&0.889$\pm$0.012 	&0.865$\pm$0.018 	&0.931$\pm$0.019 	&\textbf{0.984$\pm$0.003} 	\\
\bottomrule
    \end{tabular}}}
    \caption{Test performance (mean$\pm$std) of each method on benchmark datasets. The best performance is highlighted in bold.}
    \label{image_results}    
\end{table*}
\paragraph{Methods.}
For the experiments on regular-scale datasets, we compare with the following five methods:
\begin{itemize}
\item \textbf{OVR} \cite{rifkin2004defense}, which uses the one-versus-reset loss for training a multi-class classifier. For the LAC problem, OVR predicts the augmented class if $\mathrm{max}_{y\in \mathcal{Y}^\prime} f_{y}(\boldsymbol{x})<0$, otherwise it makes normal predictions $\argmax_{y\in \mathcal{Y}} f_{y}$ in known classes.
\item \textbf{EVM} \cite{rudd2017extreme}, which uses the extreme value machine to perform nonlinear kernel-free variable bandwidth incremental learning.
\item \textbf{LACU} \cite{da2014learning}, which exploits unlabeled data with the low-density separation assumption to adjust the classification decision boundary.
\item \textbf{iForest} \cite{liu2008isolation}, which first uses iForest to detect augmented classes and then uses the OVR method \cite{rifkin2004defense} to make normal predictions for known classes.
\item \textbf{EULAC} \cite{zhang2020unbiased}, which derives an unbiased risk estimator for LAC with unlabeled data.
\end{itemize}
For the experiments on large-scale datasets, we compare with OVR, EULAC, and following three methods:
\begin{itemize}
\item \textbf{Softmax}, which directly trains a multi-class classifier by using the softmax cross entropy loss and only makes normal predictions for known classes.
\item \textbf{Softmax-T}, which denotes the Softmax method with a threshold set for predicting the augmented class. It predicts the augmented class if $\mathrm{max}_{y\in \mathcal{Y}^\prime} g_{y}(\boldsymbol{x})$ is less than a given threshold, where $g_{y}(\boldsymbol{x})$ is a softmax score of the class $y$ for the instance $\bm{x}$. The threshold is set to 0.95.
\item \textbf{Openmax} \cite{bendale2016towards}, which can be considered as a calibrated version of the Softmax-T method. In Openmax, Weibull calibration is implemented as an augment to the SoftMax method while replacing the Softmax layer with a new OpenMax layer.
\end{itemize}
For the compared methods, we adopt the hyper-parameter settings suggested by respective papers. For our proposed method, we utilize the \emph{generalized cross entropy} (GCE) loss~\cite{zhang2018generalized} as the multi-class loss function, since it is a generalized version of the widely used cross entropy loss. We use the Adam optimization method~\cite{kingma2015adam}, with the number of training epochs set to 1500 on regular-scale datasets and 200 on large-scale datasets respectively. We set the mini-batch size to 500 on large-scale datasets and use the full batch size on regular-scale datasets. For regular-scale datasets, learning rate and weight decay are selected in $\{10^{-2},10^{-3},10^{-4}\}$. For large-scale datasets, learning rate and weight decay are selected in $\{10^{-3},10^{-4},10^{-5}\}$. For our method, $t$ and $\lambda$ are selected in $\{1,2,3\}$ and $\{0.2,0.4,\ldots,1.8,2\}$, respectively. Since our method uses a URE with negave-risk penalty regularization, we name it \textbf{NRPR}. All the experiments are conducted on GeForce GTX 3090 GPUs. 
\begin{table}[!t]
    \centering
    \resizebox{0.7\textwidth}{!}{
    \setlength{\tabcolsep}{10mm}{
    \begin{tabular}{c|ccccc}
    \toprule
\multirow{2}{*}{Datasets} & \multicolumn{3}{c}{Accuracy} \\
    \cmidrule(r){2-4}
   &ReLU &ABS &NRPR \\
    \midrule
MNIST	&0.865$\pm$0.034 	&0.776$\pm$0.014 	&\textbf{0.960$\pm$0.006} 	\\
Fashion	&0.809$\pm$0.021 	&0.747$\pm$0.019 	&\textbf{0.870$\pm$0.009} 	\\
Kuzushiji	&0.853$\pm$0.029 	&0.757$\pm$0.027 	&\textbf{0.927$\pm$0.009} 	\\
SVHN	&0.734$\pm$0.018 	&0.695$\pm$0.007 	&\textbf{0.865$\pm$0.010} 	\\
    \midrule
\multirow{2}{*}{Datasets}   & \multicolumn{3}{c}{Macro-F1} \\ \cmidrule(r){2-4}
   &ReLU &ABS &NRPR \\
    \midrule
MNIST	&0.883$\pm$0.027 	&0.813$\pm$0.011 	&\textbf{0.960$\pm$0.006} 	\\
Fashion	&0.825$\pm$0.017 	&0.782$\pm$0.015 	&\textbf{0.868$\pm$0.009} 	\\
Kuzushiji	&0.868$\pm$0.024 	&0.790$\pm$0.023 	&\textbf{0.926$\pm$0.008} 	\\
SVHN	&0.767$\pm$0.015 	&0.736$\pm$0.006 	&\textbf{0.867$\pm$0.011} 	\\
    \midrule
\multirow{2}{*}{Datasets}   & \multicolumn{3}{c}{AUC} \\ \cmidrule(r){2-4}
   &ReLU &ABS &NRPR \\
    \midrule
MNIST	&0.997$\pm$0.001 	&0.996$\pm$0.001 	&\textbf{0.998$\pm$0.001} 	\\
Fashion	&0.974$\pm$0.001 	&0.966$\pm$0.002 	&\textbf{0.985$\pm$0.002} 	\\
Kuzushiji	&0.992$\pm$0.002 	&0.984$\pm$0.004 	&\textbf{0.994$\pm$0.001} 	\\
SVHN	&\textbf{0.983$\pm$0.003} 	&0.980$\pm$0.002 	&0.981$\pm$0.003 	\\
\bottomrule
    \end{tabular}
    }
    }
    \caption{Performance comparison between our proposed risk-penalty regularization and risk-correction functions.}
    \label{reg_comparison}    
\end{table}
\begin{figure}[!htb]
    \centering
    \subfigure{\includegraphics[width=1.5in]{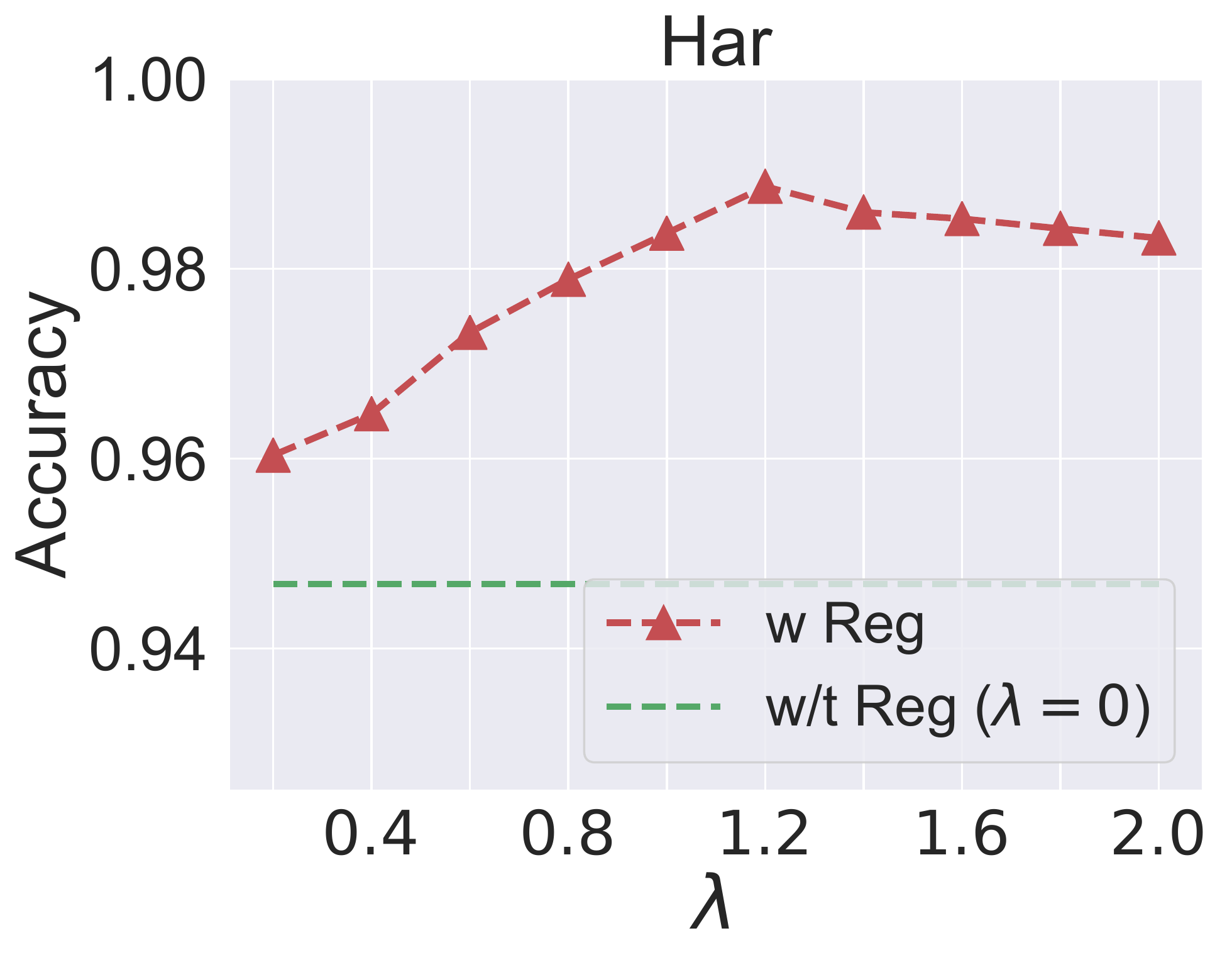}}
    \subfigure{\includegraphics[width=1.5in]{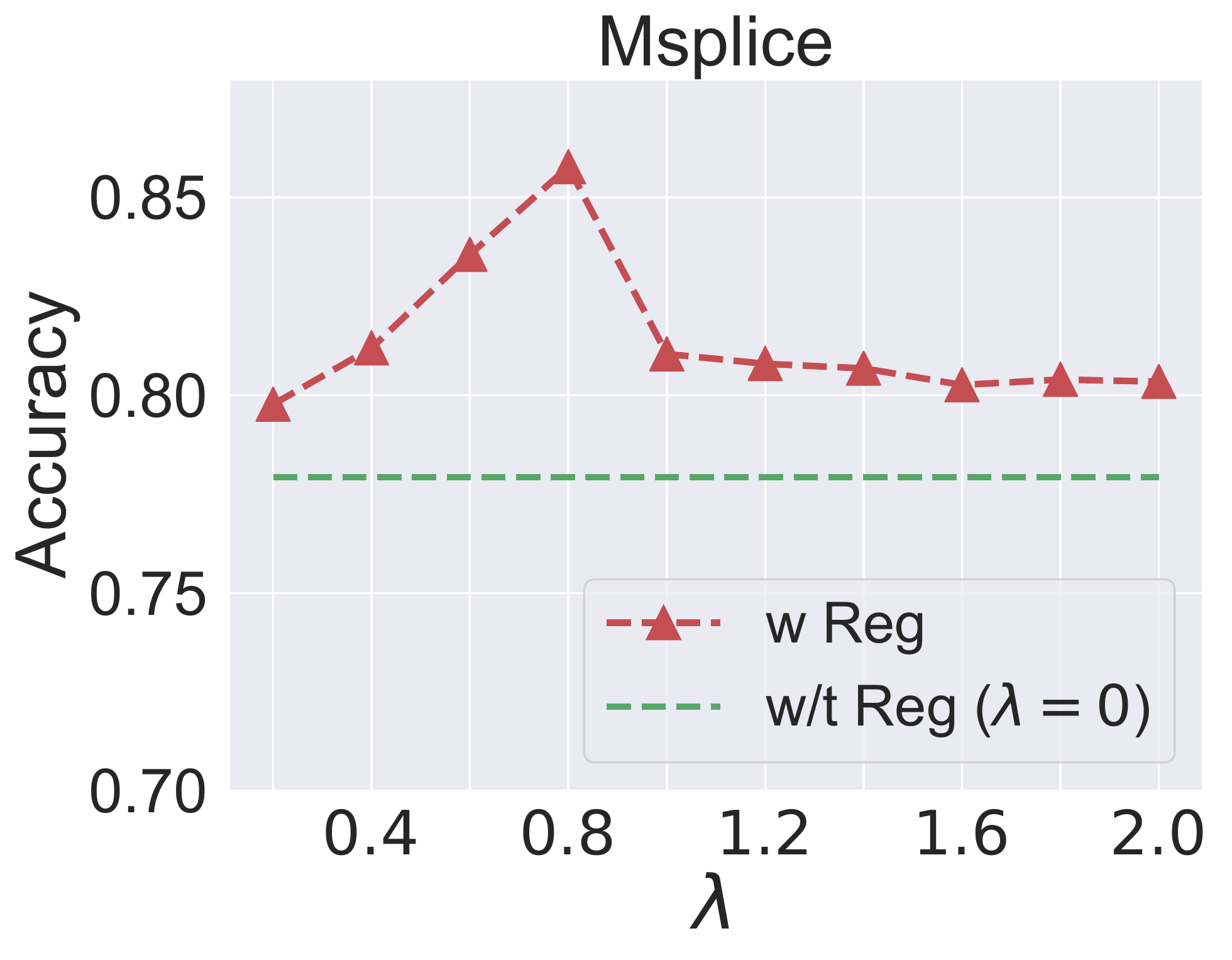}}
     \subfigure{\includegraphics[width=1.5in]{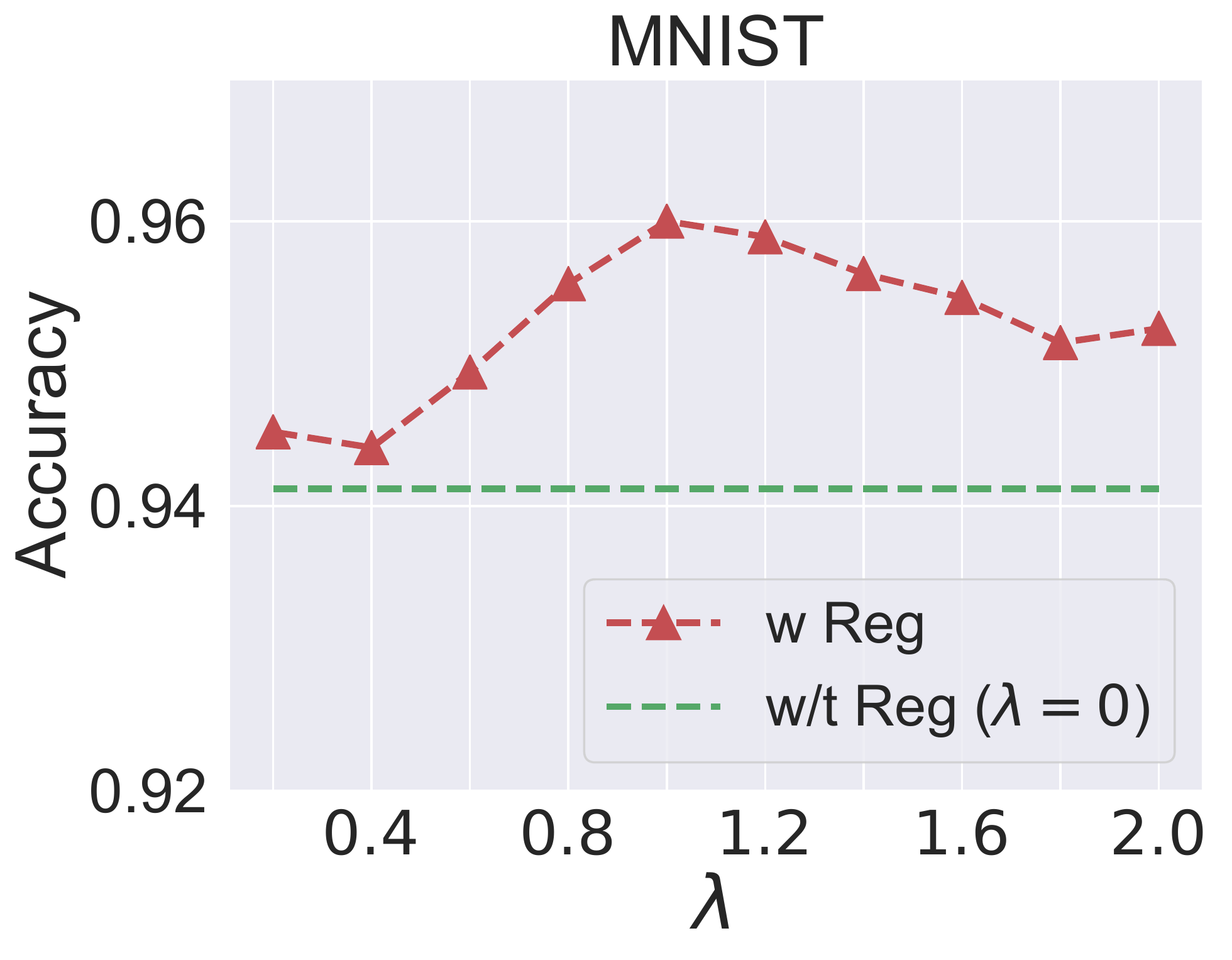}}
    \subfigure{\includegraphics[width=1.5in]{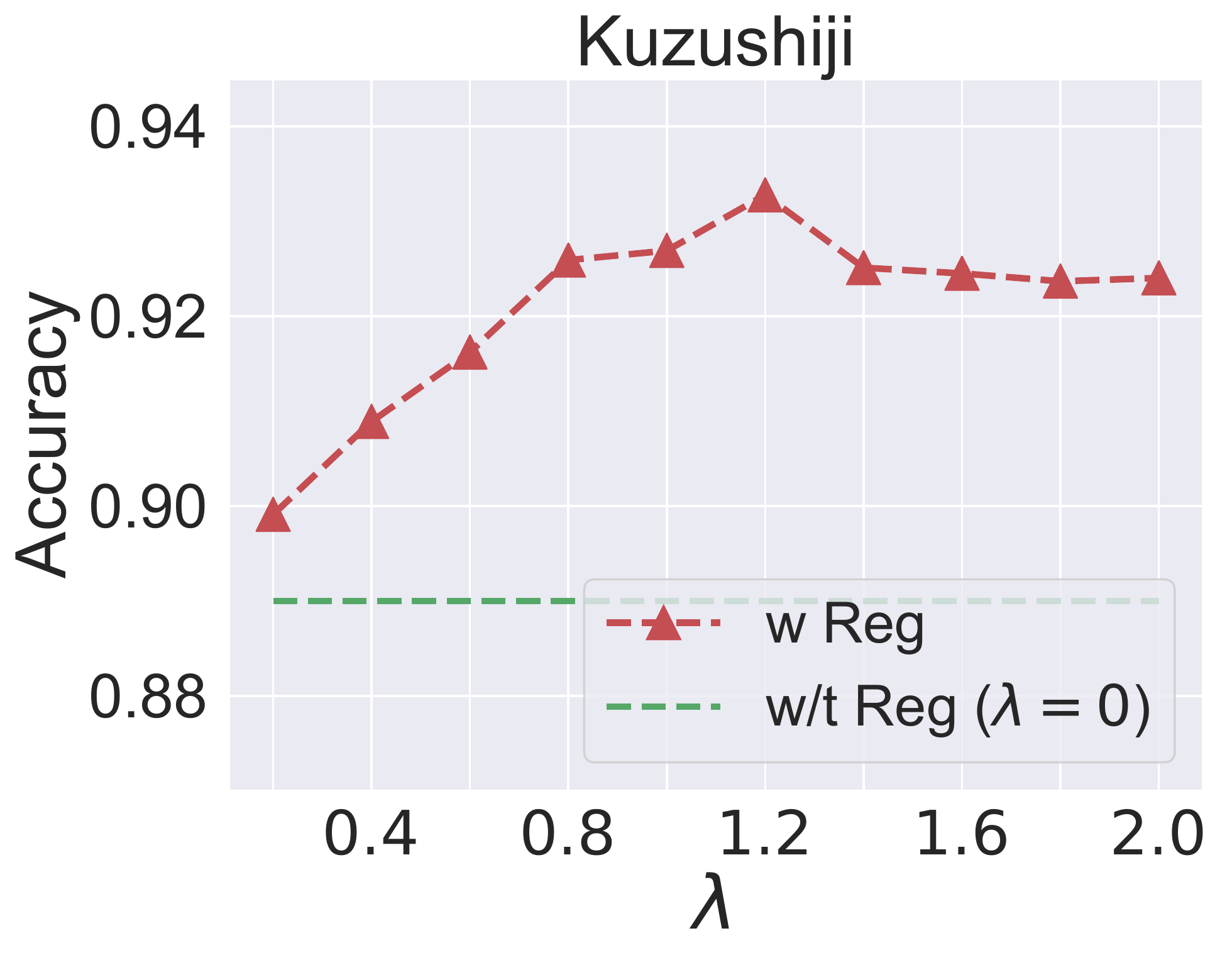}}
    \caption{Classification accuracy with different values of the regularization parameter $\lambda$.}
    \label{parameter_analysis}
\end{figure}
\subsection{Experimental Results}
\paragraph{Results on regular-scale datasets.}
Table \ref{uci_results} records the accuracy, Macro-F1, and AUC of each method on the
six regular-scale UCI datasets trained with a linear model. From Table~\ref{uci_results}, we can observe that our proposed method achieves the best performance in all the cases, which validates the effectiveness of our proposed method. Besides, the superiority of our method (using a generalized URE) over the EULAC method (using a URE relying on the OVR loss) is also clearly demonstrated, which indicates that our method is not only more flexible on the choice of used loss functions but also is more effective by substituting other loss functions rather than the OVR loss.
\paragraph{Results on large-scale datasets.}
Table \ref{image_results} records the accuracy, Macro-F1, and AUC of each method on the
four large-scale benchmark datasets. As can be observed from Table~\ref{image_results}, our proposed method also achieves the best performance in all the cases. In addition, the performance gap between the EULAC method and our proposed method is also considerably evident. These positive experimental results also support our proposed method.

\subsection{Further Analysis}
\paragraph{Effectiveness of the risk-penalty regularization.} In order to verify the effectiveness of our proposed risk-penalty regularization, we further compare it with the widely used risk-correction functions \cite{kiryo2017positive,lu2020mitigating} including the \emph{rectified linear unit} (ReLU) function $g(z) = \max(0, z)$ and the \emph{absolute value} (ABS) function $g(z) = |z|$. We conduct experiments on the four large-scale benchmark datasets and report the accuracy, Macro-F1, and AUC of each method in Table \ref{reg_comparison}. As can be seen from Table \ref{reg_comparison}, our proposed risk-penalty regularization significantly outperforms ReLU and ABS, which demonstrates the superiority of our proposed risk-penalty regularization over the risk correction scheme.

\begin{figure*}
    \centering
    \subfigure[]{\includegraphics[width=1.5in]{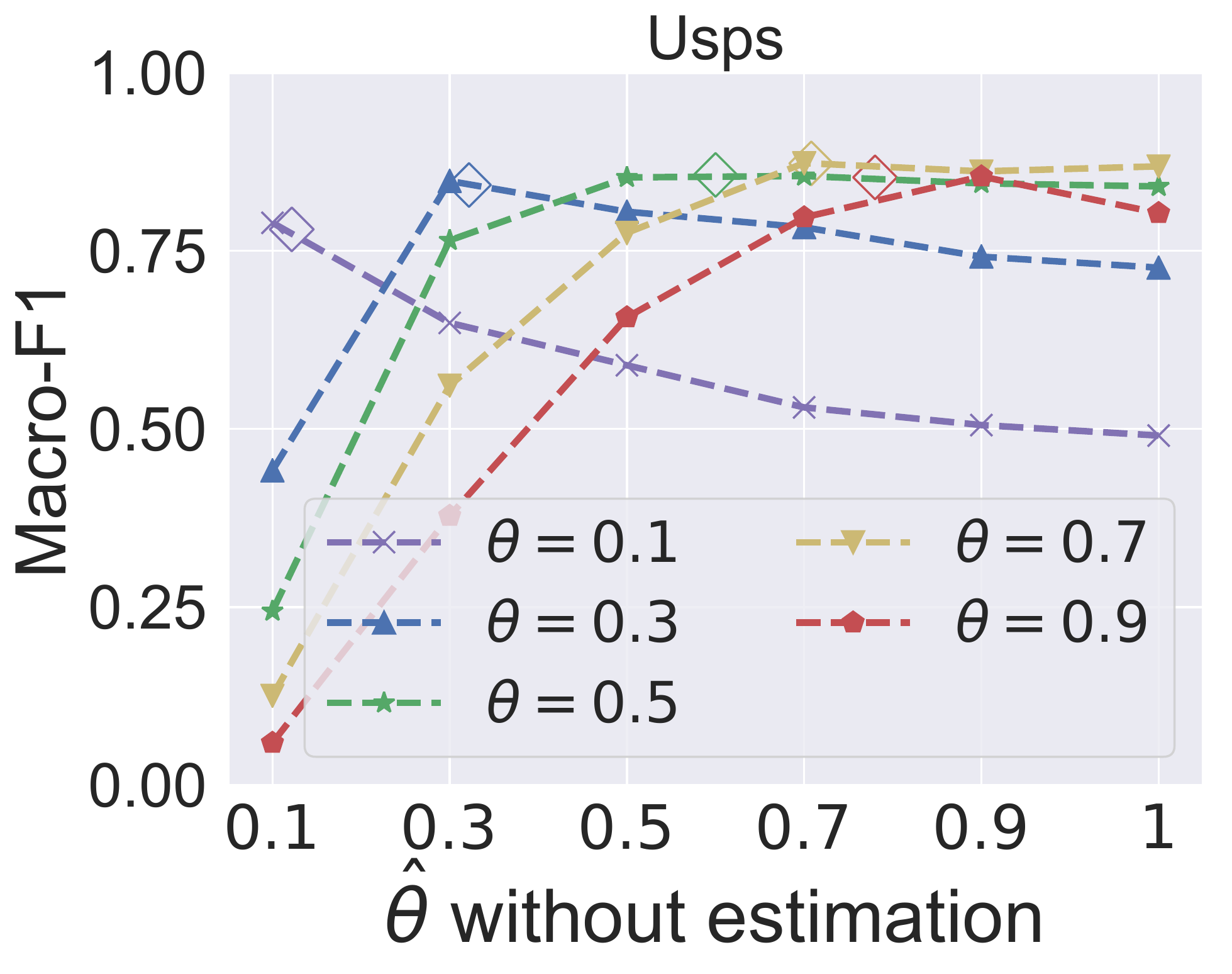}}
    \subfigure[]{\includegraphics[width=1.5in]{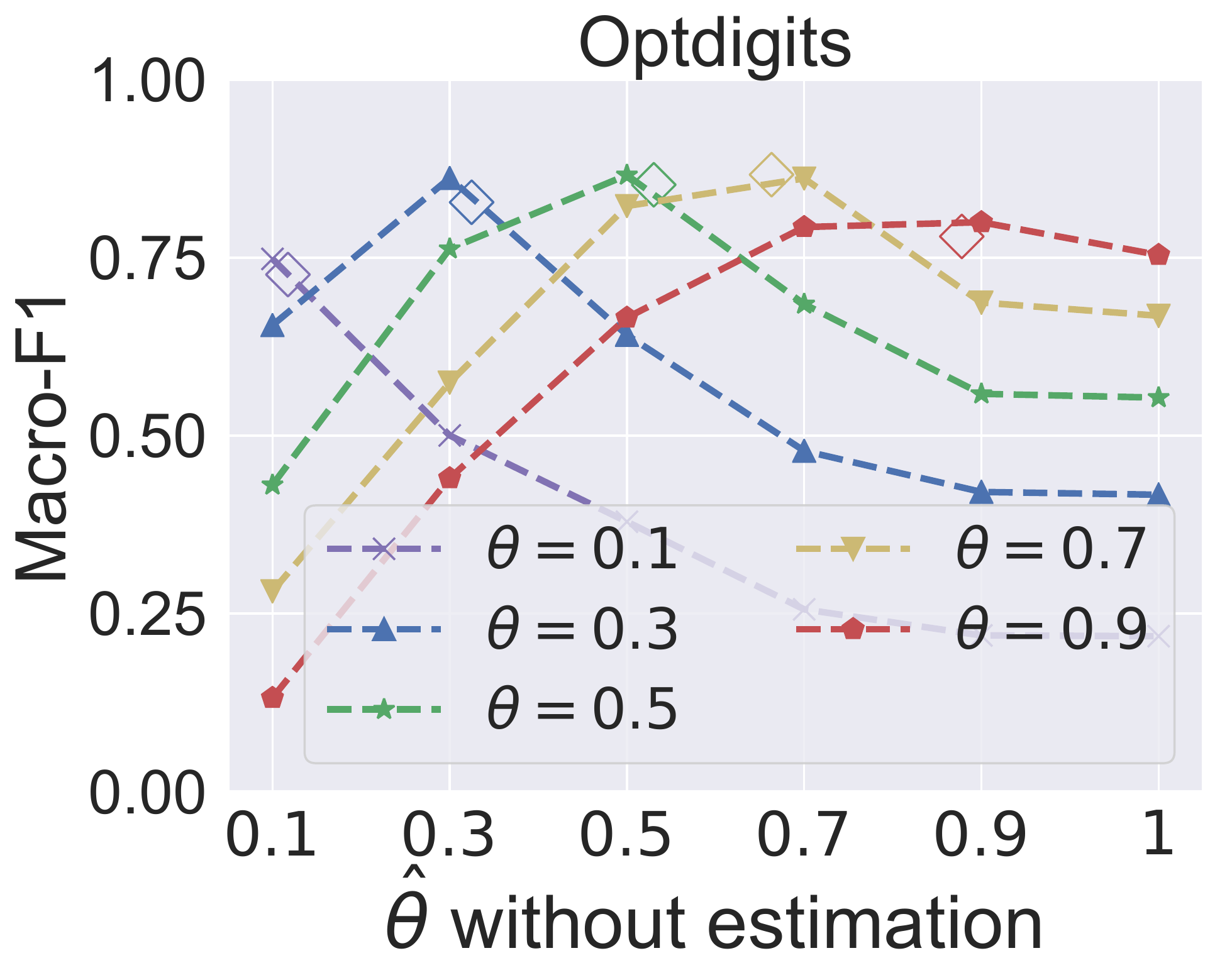}}
     \subfigure[]{\includegraphics[width=1.5in]{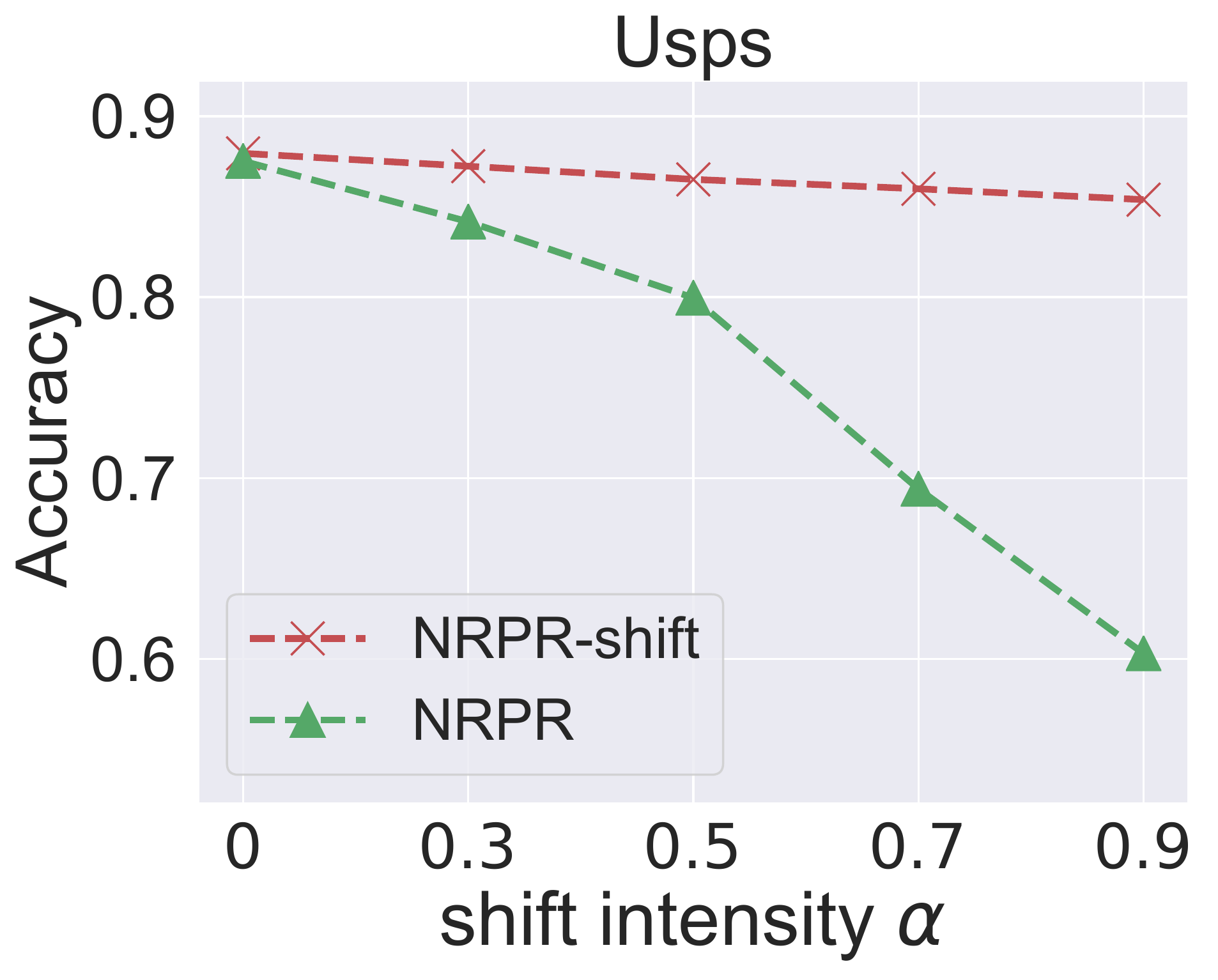}}
    \subfigure[]{\includegraphics[width=1.5in]{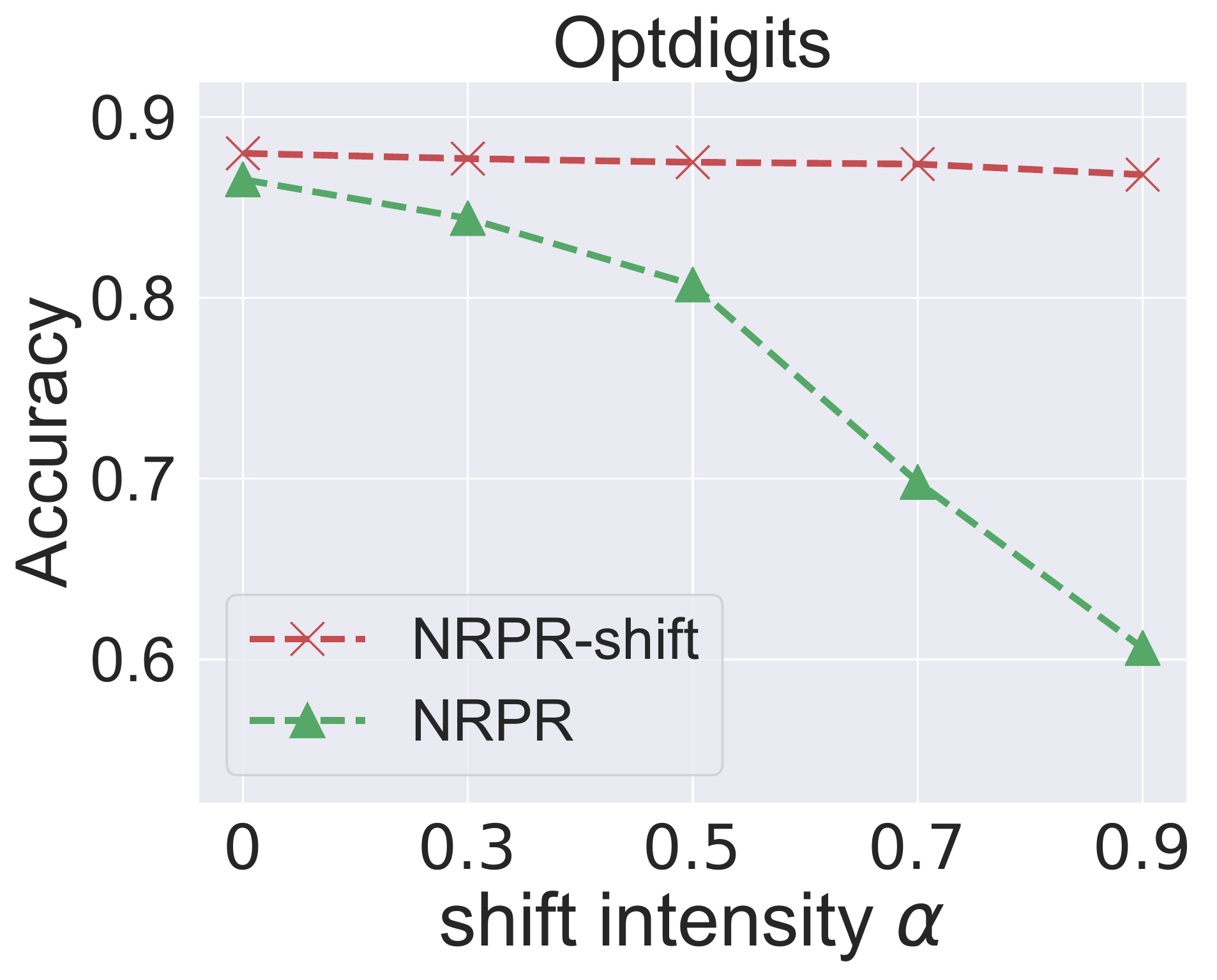}}
       \caption{(a)-(b): Influence of the mixture proportion $\theta$ ($\lozenge$ denotes the empirically estimated mixture proportion $\hat{\theta}$); (c)-(d): Ability to handle class prior shift.}
    \label{class_prior_shift}
\end{figure*}
\begin{figure*}[!thb]
    \centering
    \subfigure{\includegraphics[width=1.5in]{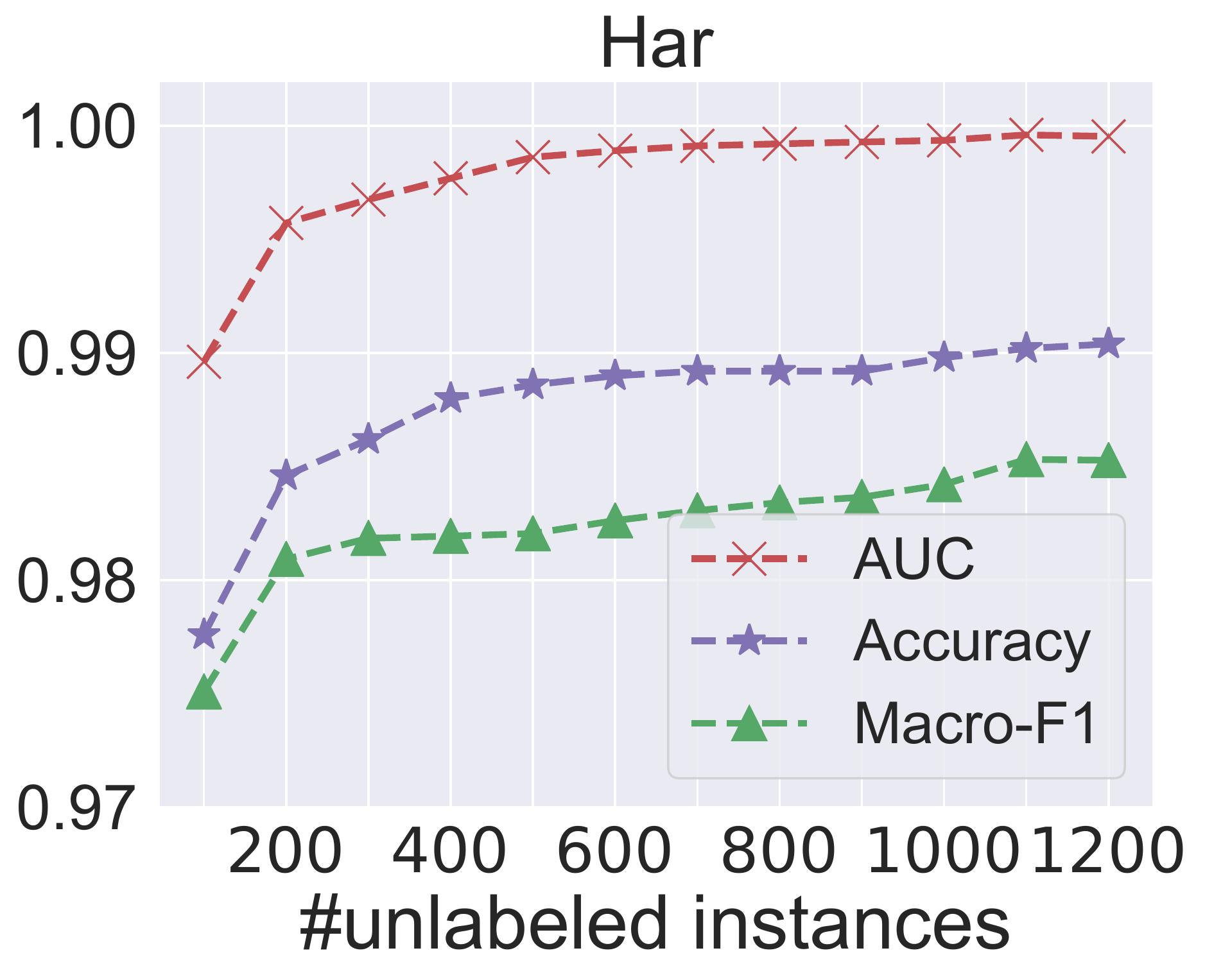}}
    \subfigure{\includegraphics[width=1.5in]{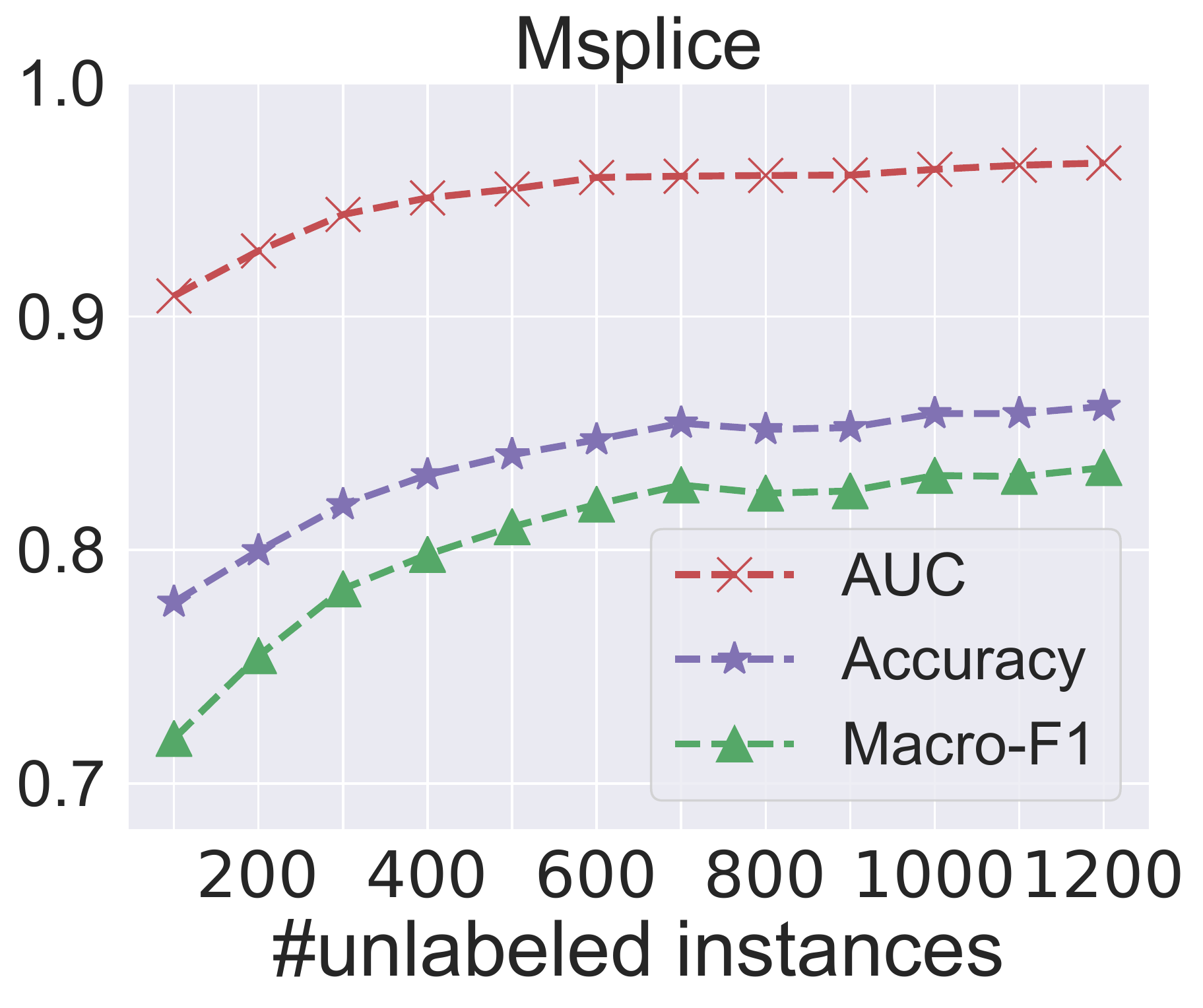}}
    \subfigure{\includegraphics[width=1.5in]{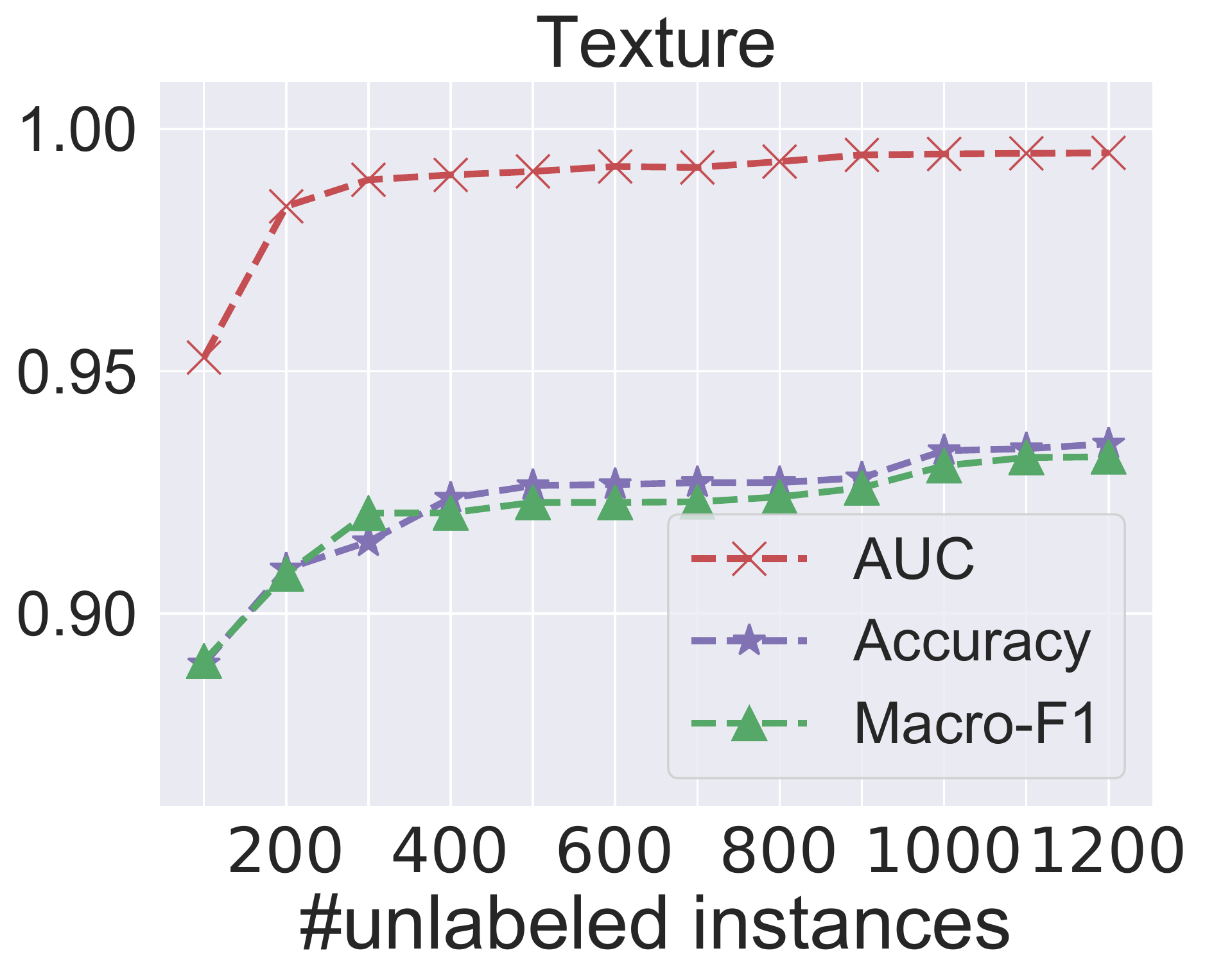}}
    \subfigure{\includegraphics[width=1.5in]{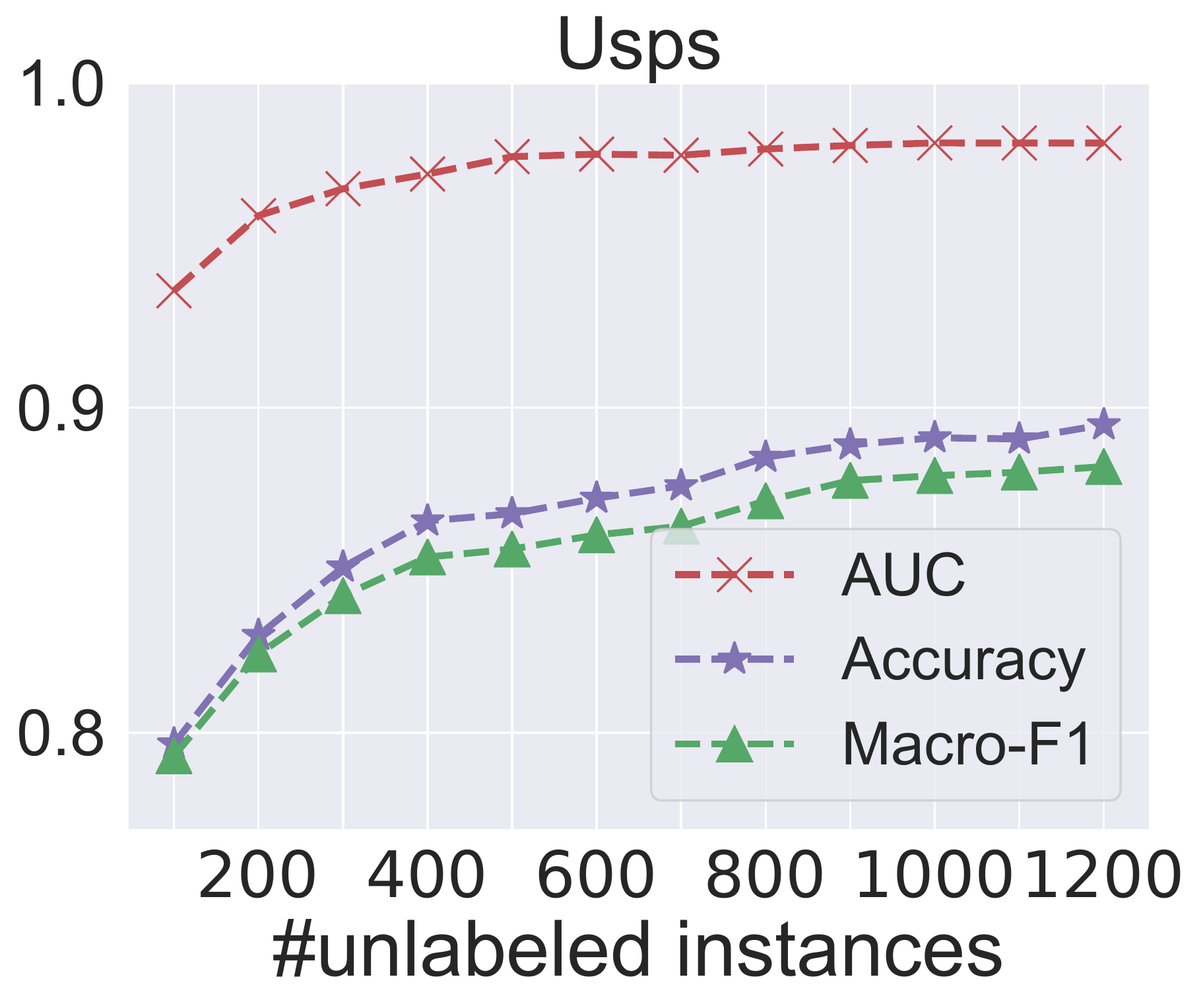}}
    \caption{Test performance on four regular-scale datasets when the number of unlabeled examples increases.}
    \label{vary_numU}
\end{figure*}
\paragraph{Regularization parameter analysis.} To show the concrete effect of our risk-penalty regularization, we further conduct parameter sensitivity analysis of the regularization parameter $\lambda$ on four datasets. Specifically, $\lambda$ is varied in $\{0, 0.2, 0.4, \dots, 1.8, 2\}$ ($\lambda=0$ means that the risk-penalty regularization is not used). From Figure \ref{parameter_analysis}, we can find that the risk-penalty regularization indeed improves the learning performance and the best performance is achieved at some intermediate value of $\lambda$, which suggests that the regularization plays an important role.

\paragraph{Influence of the mixture proportion.} For the mixture parameter $\theta$, we follow Zhang et al. \cite{zhang2020unbiased} to empirically estimate $\theta$ by using the kernel mean embedding method \cite{ramaswamy2016mixture} on regular datasets. To show the influence of the mixture proportion $\theta$, we conduct experiments on the Usps and Optdigits datasets by varying the preseted mixture proportion $\hat{\theta}$ from 0.1 to 1 under different values of the true mixture proportion $\theta$. As shown in Figure \ref{class_prior_shift} (a)-(b), the true mixture proportion can be accurately estimated in most cases.

\paragraph{Handling class prior shift.} In order to verify the ability of our proposed method to handle more complex learning environments, we further conduct experiments on Usps and Optdigits with class prior shift. Specifically, we select $\{1,2,3,4,5\}$ as known classes with prior 0.2 per class in labeled data and $\{6,7,8,9,10\}$ as augmented classes with prior 0.1 per class in test and unlabeled data. As for the prior shift in the test distribution, we reset the prior of five known classes to $\{1-\alpha,1-\frac{\alpha}{2},1,1+\frac{\alpha}{2},1+\frac{\alpha}{2}\}\times 0.1$, where $\alpha$ controls shift intensity and is selected in $\{0,0.3,0.5,0.7,0.9\}$. Figure \ref{class_prior_shift} (c)-(d) reports the accuracy for NRPR (ignoring class prior shift) and NRPR-shift (considering class prior shift in model training according to our derived Theorem \ref{thm3}). As shown in Figure \ref{class_prior_shift} (c)-(d), when the shift intensity increases, performance of NRPR rapidly decreases while NRPR-shift retains satisfactory performance even with a high shift intensity. This observation suggests that our proposed NRPR-shift method could well handle changing learning environments.

\paragraph{Performance of increasing unlabeled examples.} As shown in Theorem \ref{thm2}, the performance of our NRPR method is expected to be improved if more unlabeled examples are provided. To empirically validate such a theoretical finding, we conduct experiments on four regular-scale datasets by changing the number of unlabeled examples from 100 to 1200. As shown in Figure \ref{vary_numU}, the accuracy, Macro-F1, and AUC of our NRPR method generally increases given more unlabeled examples. This observation clearly supports the derived estimation error bound in Theorem \ref{thm2}.

\section{Related Work}
In this section, we discuss some research topics and studies that are relevant to our work.
\paragraph{Class-incremental learning.} As many practical machine learning systems require the ability to continually learn new knowledge, \emph{class-incremental learning} (CIL) \cite{zhou2002hybrid,Tao_2020_CVPR} was proposed to handling the increasing number of classes. \emph{Learning with augmented classes} (LAC) \cite{da2014learning,zhang2020unbiased,ding2018imbalanced}, the problem studied in this paper, is a main task in CIL. The previous research on LAC \cite{da2014learning} showed that unlabeled data can be exploited to enhance the performance of LAC. Furthermore, Zhang et al. \cite{zhang2020unbiased} showed that given unlabeled data, an unbiased risk estimator (URE) can be derived, which can be minimized for LAC with theoretical guarantees. 
\paragraph{Open-set recognition.} In \emph{open-set recognition} (OSR) \cite{geng2020recent}, new classes unseen in training the phase could appear in the test phase, and the learned classifier is required to not only accurately classify known classes but also effectively deal with unknown classes. Therefore, OSR can be seen as a cousin of the LAC problem in the computer vision and pattern recognition communities. Most studies on OSR aim to explore the relationship between known and augmented classes by using specific techniques, such as the open space risk \cite{scheirer2014probability}, nearest neighbor approach \cite{mendes2017nearest}, and extreme value theory \cite{rudd2017extreme}. It is noteworthy that many OSR methods utilized the domain knowledge, while our work focuses on a general setting without additional information.

\paragraph{Multi-class positive-unlabeled learning.} Learning from multi-class positive-unlabeled data \cite{xu2017multi,shu2020learning} is similar to the LAC problem with unlabeled data, by taking the known classes as the multi-class positive. Although our work shares similarity with the two studies \cite{xu2017multi,shu2020learning}, our LAC risk derived in a different context brings novel understandings for the LAC and can handle more complex environments (i.e., class prior shift), while they assumed the class priors to be known.

\section{Conclusion}
In this paper, we studied the problem of \emph{learning with augmented classes} (LAC). To solve this problem, we proposed a generalized \emph{unbiased risk estimator} (URE) that can be equipped with arbitrary loss functions while maintaining the theoretical guarantees, given unlabeled data for LAC. We showed that our generalized URE can recover the URE proposed by the previous study \cite{zhang2020unbiased}. We also established an estimation error bound, which achieves the optimal parametric convergence rate. To alleviate the issue of negative empirical risk commonly encountered by previous studies, we further proposed a novel risk-penalty regularization term. Comprehensive experimental results clearly verified the effectiveness of our proposed method. In future work, it would be interesting to develop more powerful regularization techniques for further improving the performance of our proposed method. 
\section{Acknowledgments}
This work is supported by the National Natural Science Foundation of China (Grant No. 62106028), Chongqing Overseas Chinese Entrepreneurship and Innovation Support Program, and CAAI-Huawei MindSpore Open Fund.

\bibliography{aaai23}
\newpage
\appendix
\section{Proof of Theorem 1}\label{proof_theorem_1}

Recall that
\begin{align}
\nonumber
R(f)=\theta\mathbb{E}_{(\bm{x},y)\sim P_{\mathrm{kc}}}[\mathcal{L}(f(\bm{x}),y)] + (1-\theta)\mathbb{E}_{(\bm{x},y=\mathtt{ac})\sim P_{\mathrm{ac}}}[\mathcal{L}(f(\bm{x}),\mathtt{ac})],
\end{align}
we can observe that there are two parts in the classification risk $R(f)$, including the risk of know classes $R_{\mathrm{kc}}=\mathbb{E}_{(\bm{x},y)\sim P_{\mathrm{kc}}}[\mathcal{L}(f(\bm{x}),y)]$ and the risk of the augmented class $R_{\mathrm{ac}}=\mathbb{E}_{(\bm{x},y=\mathtt{ac})\sim P_{\mathrm{ac}}}[\mathcal{L}(f(\bm{x}),\mathtt{ac})]$. Then $R(f)$ can be represented by $R(f)=\theta R_{\mathrm{kc}}(f)+(1-\theta)R_{\mathrm{ac}}(f)$.

According to 
$P_{\mathrm{te}} = \theta\cdot P_{\mathrm{kc}} + (1-\theta)\cdot P_{\mathrm{ac}}$, we have $(1-\theta)P_{\mathrm{ac}} = P_{\mathrm{te}} - \theta P_{\mathrm{kc}}$. Thus we can express $(1-\theta)R_{\mathrm{ac}}(f)$ by 
\begin{align}
\nonumber
&(1-\theta)\mathbb{E}_{(\bm{x},y=\mathtt{ac})\sim P_{\mathrm{ac}}}[\mathcal{L}(f(\bm{x}),\mathtt{ac})]=\mathbb{E}_{(\bm{x},y)\sim P_{\mathrm{te}}}[\mathcal{L}(f(\bm{x}),\mathtt{ac})]-\theta\mathbb{E}_{(\bm{x},y)\sim P_{\mathrm{kc}}}[\mathcal{L}(f(\bm{x}),\mathtt{ac})].
\end{align}
By substituting the above equality into $R(f)$, we have
\begin{align}
\nonumber
R(f)&=\theta\mathbb{E}_{(\bm{x},y)\sim P_{\mathrm{kc}}}[\mathcal{L}(f(\bm{x}),y)]+\mathbb{E}_{(\bm{x},y)\sim P_{\mathrm{te}}}[\mathcal{L}(f(\bm{x}),\mathtt{ac})]-\theta\mathbb{E}_{(\bm{x},y)\sim P_{\mathrm{kc}}}[\mathcal{L}(f(\bm{x}),\mathtt{ac})]\\
\nonumber
&=\theta \mathbb{E}_{(\bm{x},y)\sim P_{\mathrm{kc}}}[\mathcal{L}(f(\boldsymbol{x}),y)-\mathcal{L}(f(\boldsymbol{x}),\mathtt{ac})] + \mathbb{E}_{\bm{x}\sim P_{\mathrm{te}}}[\mathcal{L}(f(\boldsymbol{x}),\mathtt{ac})]\\
\nonumber
&=R_{\mathrm{LAC}}(f),
\end{align}
which completes the proof of Theorem 1.\qed

\section{Proof of Theorem 2}\label{proof_theorem_2}
Recall that the unbiased risk estimator we derived is represented as follows:
\begin{align}
\nonumber
\widehat{R}_{\mathrm{LAC}}(f) 
&= \frac{\theta}{n}\sum\nolimits_{i=1}^n\Big(\mathcal{L}(f(\boldsymbol{x}_i),y_i)-\mathcal{L}(f(\boldsymbol{x}_i),\mathtt{ac})\Big) + \frac{1}{m}\sum\nolimits_{j=1}^m\mathcal{L}(f(\boldsymbol{x}_j),\mathtt{ac}).
\end{align}
Let us further introduce
\begin{align}
\nonumber
\widehat{R}_{\mathrm{kac}}(f) &= \frac{\theta}{n}\sum\nolimits_{i=1}^n\Big(\mathcal{L}(f(\boldsymbol{x}_i),y_i)-\mathcal{L}(f(\boldsymbol{x}_i),\mathtt{ac})\Big),\\
\nonumber
\widehat{R}_{\mathrm{tac}}(f) &= \frac{1}{m}\sum\nolimits_{j=1}^m\mathcal{L}(f(\boldsymbol{x}_j),\mathtt{ac}),\\
\nonumber
{R}_{\mathrm{kac}}(f) &= \mathbb{E}_{(\bm{x},y)\sim P_{\mathrm{kc}}}\big[\mathcal{L}(f(\boldsymbol{x}),y)-\mathcal{L}(f(\boldsymbol{x}),\mathtt{ac})\big],\\
\nonumber
{R}_{\mathrm{tac}}(f) &= \mathbb{E}_{\bm{x}\sim P_{\mathrm{te}}}\big[\mathcal{L}(f(\boldsymbol{x}),\mathtt{ac})\big].
\end{align}
Therefore, we have
\begin{align}
\nonumber
\widehat{R}_{\mathrm{LAC}}(f) &= \theta\cdot \widehat{R}_{\mathrm{kac}}(f) + \widehat{R}_{\mathrm{tac}},\\
\nonumber
{R}_{\mathrm{LAC}}(f) &= \theta\cdot {R}_{\mathrm{kac}}(f) + {R}_{\mathrm{tac}}.
\end{align}
Thus we obtain
\begin{align}
\nonumber
\sup_{f\in\mathcal{F}}\left|{R}_{\mathrm{LAC}}(f)-\widehat{R}_{\mathrm{LAC}}(f)\right|&\leq \theta\cdot\sup_{f\in\mathcal{F}}\left|{R}_{\mathrm{kac}}(f)-\widehat{R}_{\mathrm{kac}}(f)\right|+\sup_{f\in\mathcal{F}}\left|{R}_{\mathrm{tac}}(f)-\widehat{R}_{\mathrm{tac}}(f)\right|.
\end{align}
Hence the problem becomes how to find an upper bound of each term in the right hand side of the above inequality.
\begin{lemma}
\label{lem_kac}
Assume the multi-class loss function $\mathcal{L}(f(\bm{x}),y)$ is $\rho$-Lipschitz ($0<\rho<\infty$) with respect to $f(\bm{x})$ for all $y\in\mathcal{Y}$ and upper bounded by a constant $C_{\mathcal{L}}$, i.e., $C_{\mathcal{L}} = \sup_{\bm{x}\in\mathcal{X},y\in\mathcal{Y},f\in\mathcal{F}}\mathcal{L}(f(\bm{x}),y)$. Then, for any $\delta>0$, with probability at least $1-\delta$, we have
\begin{align}
\nonumber
\sup_{f\in\mathcal{F}}\left|{R}_{\mathrm{kac}}(f)\!-\!\widehat{R}_{\mathrm{kac}}(f)\right| \!\leq\! 4\sqrt{2}\rho(k+1)\frac{C_{\mathcal{F}}}{n} \!+\! 2C_{\mathcal{L}}\sqrt{\frac{\log\frac{2}{\delta}}{2n}}.
\end{align}
\end{lemma}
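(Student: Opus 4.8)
The plan is to control $\sup_{f\in\mathcal{F}}\left|R_{\mathrm{kac}}(f)-\widehat{R}_{\mathrm{kac}}(f)\right|$ by the standard uniform-convergence recipe: a bounded-differences (McDiarmid) concentration step to pass from the supremum to its expectation, a symmetrization step to bring in the Rademacher complexity, and finally a contraction step to peel off the loss and reduce everything to the per-coordinate complexities $\mathfrak{R}_n(\mathcal{F}_y)$. Both stated assumptions are used: boundedness by $C_{\mathcal{L}}$ drives the concentration term, and $\rho$-Lipschitzness drives the contraction term.

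First I would view $\Phi:=\sup_{f\in\mathcal{F}}\big(R_{\mathrm{kac}}(f)-\widehat{R}_{\mathrm{kac}}(f)\big)$ as a function of the i.i.d. labeled sample $\{(\bm{x}_i,y_i)\}_{i=1}^n$. Each summand defining $\widehat{R}_{\mathrm{kac}}$ has the form $g_f(\bm{x}_i,y_i)=\mathcal{L}(f(\bm{x}_i),y_i)-\mathcal{L}(f(\bm{x}_i),\mathtt{ac})$ with both losses in $[0,C_{\mathcal{L}}]$, so replacing a single example perturbs $\widehat{R}_{\mathrm{kac}}$, and hence $\Phi$, by at most $2C_{\mathcal{L}}/n$. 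McDiarmid's inequality then gives, with probability at least $1-\delta/2$, that $\Phi\le\mathbb{E}[\Phi]+2C_{\mathcal{L}}\sqrt{\log(2/\delta)/(2n)}$. Applying the identical argument to $-\Phi$ controls $\sup_{f}\big(\widehat{R}_{\mathrm{kac}}(f)-R_{\mathrm{kac}}(f)\big)$, and a union bound over the two events (each at level $\delta/2$) upgrades the one-sided estimates to the two-sided quantity while producing exactly the $\log(2/\delta)$ appearing in the statement.

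Next I would bound $\mathbb{E}[\Phi]$ by symmetrization, yielding $\mathbb{E}[\Phi]\le 2\,\mathfrak{R}_n(\mathcal{H})$ where $\mathcal{H}=\{(\bm{x},y)\mapsto g_f(\bm{x},y):f\in\mathcal{F}\}$ is the composite loss class. By subadditivity of Rademacher complexity together with the sign-symmetry of the Rademacher variables (which removes the minus sign in front of the $\mathtt{ac}$ term), $\mathfrak{R}_n(\mathcal{H})\le\mathfrak{R}_n(\{\mathcal{L}(f(\bm{x}),y)\})+\mathfrak{R}_n(\{\mathcal{L}(f(\bm{x}),\mathtt{ac})\})$, so it suffices to bound the Rademacher complexity of a single loss term.

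The crux is the contraction step. Since $\mathcal{L}(f(\bm{x}),\cdot)$ depends on the \emph{entire} output vector $f(\bm{x})\in\mathbb{R}^{k+1}$ rather than a single score, the scalar Ledoux--Talagrand contraction does not apply directly; instead I would invoke Maurer's vector-contraction inequality for $\rho$-Lipschitz losses, giving $\mathfrak{R}_n(\{\mathcal{L}(f(\bm{x}),\cdot)\})\le\sqrt{2}\,\rho\sum_{j=1}^{k+1}\mathfrak{R}_n(\mathcal{F}_j)$. The assumption $\mathfrak{R}_n(\mathcal{F}_y)\le C_{\mathcal{F}}/\sqrt{n}$ bounds each of the $k+1$ coordinate classes, so each loss term is at most $\sqrt{2}\,\rho(k+1)C_{\mathcal{F}}/\sqrt{n}$; summing the two loss terms and carrying the symmetrization factor of $2$ produces the Rademacher contribution $4\sqrt{2}\,\rho(k+1)C_{\mathcal{F}}/\sqrt{n}$, which combines with the concentration term to give the claimed estimate (with the $1/\sqrt{n}$ decay consistent with the rate later used in Theorem~\ref{thm2}). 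I expect this vector-contraction step to be the main obstacle: the simultaneous dependence of $\mathcal{L}$ on all $k+1$ outputs, together with the sample-dependent label argument $y$, must be treated carefully to extract the clean $\sum_y\mathfrak{R}_n(\mathcal{F}_y)$ decomposition and the correct $\sqrt{2}\,(k+1)$ constant.
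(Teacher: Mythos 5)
Your proposal is correct and follows essentially the same route as the paper's proof: McDiarmid's inequality with bounded difference $2C_{\mathcal{L}}/n$, symmetrization, Maurer's vector-contraction inequality combined with $\mathfrak{R}_n(\mathcal{F}_y)\le C_{\mathcal{F}}/\sqrt{n}$, and a union bound over the two one-sided events at level $\delta/2$ each. The only cosmetic difference is that you derive the constant $4$ explicitly as $2\times 2$ (symmetrization times the two loss terms in $g_f$), whereas the paper states $\mathbb{E}[\Phi]\le 4\mathfrak{R}_n(\mathcal{L}\circ\mathcal{F})$ as routine; note also that the $C_{\mathcal{F}}/n$ in the lemma statement is a typo for $C_{\mathcal{F}}/\sqrt{n}$, which is what both you and the paper's proof actually establish.
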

\begin{proof}
Suppose an example in $\widehat{R}_{\mathrm{kac}}(f)$ is replaced by another arbitrary example, then the change of $\sup_{f\in\mathcal{F}}\big({R}_{\mathrm{kac}}(f) - \widehat{R}_{\mathrm{kac}}(f)\big)$ is no greater than $\frac{2C_{\mathcal{L}}}{n}$. Then, by applying the Diarmid's inequality \cite{mcdiarmid1989method}, for any $\delta>0$, with probability at least $1-\frac{\delta}{2}$,
\begin{align}
\nonumber
\sup\nolimits_{f\in\mathcal{F}}\big({R}_{\mathrm{kac}}(f) - \widehat{R}_{\mathrm{kac}}(f)\big) &\leq 
\mathbb{E}\Big[\sup\nolimits_{f\in\mathcal{F}}\big({R}_{\mathrm{kac}}(f) - \widehat{R}_{\mathrm{kac}}(f)\big)\Big] + 2C_{\mathcal{L}}\sqrt{\frac{\log\frac{2}{\delta}}{2n}}.
\end{align}
Besides, it is routine \cite{mohri2012foundations} to show
\begin{gather}
\nonumber
\mathbb{E}\Big[\sup\nolimits_{f\in\mathcal{F}}\big({R}_{\mathrm{kac}}(f) - \widehat{R}_{\mathrm{kac}}(f)\big)\Big] \leq 4\mathfrak{R}_n(\mathcal{L}\circ\mathcal{F}).
\end{gather}
Then, we need to upper bound ${\mathfrak{R}}_n(\mathcal{L}\circ\mathcal{F})$. Since $\mathcal{L}$ is $\rho$-Lipschitz with respect to $f(\boldsymbol{x})$, according to the \emph{Rademacher vector contraction inequality} \cite{maurer2016vector}, we have
\begin{gather}
\nonumber
\widehat{\mathfrak{R}}_n(\mathcal{L}\circ\mathcal{F})\leq \sqrt{2}\rho\sum\nolimits_{y=1}^{k+1}\widehat{\mathfrak{R}}_n(\mathcal{F}_{y}).
\end{gather}
By taking the expectation of $\widehat{\mathfrak{R}}_n(\mathcal{L}\circ\mathcal{F})$ and $\widehat{\mathfrak{R}}_n(\mathcal{F}_{y})$ over $p(\bm{x})$, we have 
${\mathfrak{R}}_n(\mathcal{L}\circ\mathcal{F})\leq \sqrt{2}\rho\sum_{y=1}^{k+1}{\mathfrak{R}}_n(\mathcal{F}_{y})$. By further considering $\mathfrak{R}_n(\mathcal{F}_y)\leq C_{\mathcal{F}}/\sqrt{n}$, then we have for any $\delta>0$, with probability at least $1-\frac{\delta}{2}$,
\begin{align}
\nonumber
\sup_{f\in\mathcal{F}}\!\big({R}_{\mathrm{kac}}(f) \!-\! \widehat{R}_{\mathrm{kac}}(f)\big) \!\leq\! 
4\sqrt{2}\rho(k+1)\frac{C_{\mathcal{F}}}{\sqrt{n}} \!+\! 2C_{\mathcal{L}}\sqrt{\frac{\log\frac{2}{\delta}}{2n}}.
\end{align}
By further considering the other side $\sup_{f\in\mathcal{F}}\big(\widehat{R}_{\mathrm{kac}}(f)-{R}_{\mathrm{kac}}(f)\big)$, we have for $\delta>0$, with probability at least $1-\delta$,
\begin{align}
\nonumber
\sup_{f\in\mathcal{F}}\left|{R}_{\mathrm{kac}}(f) \!-\! \widehat{R}_{\mathrm{kac}}(f)\right| \!\leq\! \
4\sqrt{2}\rho(k\!+\!1)\frac{C_{\mathcal{F}}}{\sqrt{n}} \!+\! 2C_{\mathcal{L}}\sqrt{\frac{\log\frac{2}{\delta}}{2n}},
\end{align}
which completes the proof of Lemma \ref{lem_kac}.
\end{proof}

\begin{lemma}
\label{lem_tac}
Assume the multi-class loss function $\mathcal{L}(f(\bm{x}),y)$ is $\rho$-Lipschitz ($0<\rho<\infty$) with respect to $f(\bm{x})$ for all $y\in\mathcal{Y}$ and upper bounded by a constant $C_{\mathcal{L}}$, i.e., $C_{\mathcal{L}} = \sup_{\bm{x}\in\mathcal{X},y\in\mathcal{Y},f\in\mathcal{F}}\mathcal{L}(f(\bm{x}),y)$. Then, for any $\delta>0$, with probability at least $1-\delta$, we have
\begin{align}
\nonumber
\sup_{f\in\mathcal{F}}\left|{R}_{\mathrm{tac}}(f)-\widehat{R}_{\mathrm{tac}}(f)\right| \!\leq\! 2\sqrt{2}\rho(k+1)\frac{C_{\mathcal{F}}}{\sqrt{m}} + C_{\mathcal{L}}\sqrt{\frac{\log\frac{2}{\delta}}{2m}}.
\end{align}
\end{lemma}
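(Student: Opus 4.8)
The plan is to mirror the proof of Lemma~\ref{lem_kac} almost verbatim, exploiting the fact that $\widehat{R}_{\mathrm{tac}}(f)=\frac{1}{m}\sum_{j=1}^m\mathcal{L}(f(\bm{x}_j),\mathtt{ac})$ is an empirical average of a \emph{single} bounded loss term over the $m$ unlabeled examples drawn i.i.d.\ from $P_{\mathrm{te}}$, while $R_{\mathrm{tac}}(f)=\mathbb{E}_{\bm{x}\sim P_{\mathrm{te}}}[\mathcal{L}(f(\bm{x}),\mathtt{ac})]$ is its expectation. First I would check the bounded-difference condition for McDiarmid's inequality: replacing any single $\bm{x}_j$ alters $\widehat{R}_{\mathrm{tac}}(f)$ by at most $C_{\mathcal{L}}/m$, since $0\le\mathcal{L}\le C_{\mathcal{L}}$ and each example contributes only one loss term (in contrast to $\widehat{R}_{\mathrm{kac}}$, where a difference of two losses gives the larger constant $2C_{\mathcal{L}}/n$). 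Applying McDiarmid's inequality \cite{mcdiarmid1989method} to $\sup_{f\in\mathcal{F}}\big(R_{\mathrm{tac}}(f)-\widehat{R}_{\mathrm{tac}}(f)\big)$ then yields, with probability at least $1-\delta/2$,
\[
\sup_{f\in\mathcal{F}}\big(R_{\mathrm{tac}}(f)-\widehat{R}_{\mathrm{tac}}(f)\big)\le \mathbb{E}\Big[\sup_{f\in\mathcal{F}}\big(R_{\mathrm{tac}}(f)-\widehat{R}_{\mathrm{tac}}(f)\big)\Big]+C_{\mathcal{L}}\sqrt{\tfrac{\log(2/\delta)}{2m}}.
\]

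Next I would bound the expected supremum by symmetrization. Because the summand is a single composed loss rather than a difference, the routine symmetrization argument \cite{mohri2012foundations} gives $\mathbb{E}\big[\sup_{f\in\mathcal{F}}(R_{\mathrm{tac}}-\widehat{R}_{\mathrm{tac}})\big]\le 2\mathfrak{R}_m(\mathcal{L}\circ\mathcal{F})$, where the leading constant is $2$ rather than the $4$ of Lemma~\ref{lem_kac}. I would then invoke the Rademacher vector contraction inequality \cite{maurer2016vector} with the $\rho$-Lipschitz assumption to obtain $\mathfrak{R}_m(\mathcal{L}\circ\mathcal{F})\le\sqrt{2}\rho\sum_{y=1}^{k+1}\mathfrak{R}_m(\mathcal{F}_y)$, and finally apply $\mathfrak{R}_m(\mathcal{F}_y)\le C_{\mathcal{F}}/\sqrt{m}$ to conclude $\mathfrak{R}_m(\mathcal{L}\circ\mathcal{F})\le\sqrt{2}\rho(k+1)C_{\mathcal{F}}/\sqrt{m}$. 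Substituting this into the McDiarmid display bounds the one-sided deviation.

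To finish, I would repeat the identical argument for the reverse direction $\sup_{f\in\mathcal{F}}\big(\widehat{R}_{\mathrm{tac}}(f)-R_{\mathrm{tac}}(f)\big)$, which also holds with probability at least $1-\delta/2$, and take a union bound over the two events to pass from the one-sided deviations to the absolute value $\sup_{f\in\mathcal{F}}|R_{\mathrm{tac}}(f)-\widehat{R}_{\mathrm{tac}}(f)|$ with probability at least $1-\delta$. Collecting the two terms gives exactly the claimed bound $2\sqrt{2}\rho(k+1)C_{\mathcal{F}}/\sqrt{m}+C_{\mathcal{L}}\sqrt{\log(2/\delta)/(2m)}$.

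I do not anticipate a genuine obstacle, as the architecture of the argument is identical to that of Lemma~\ref{lem_kac}. The only point demanding care is correctly tracking the constants: the bounded-difference constant is $C_{\mathcal{L}}/m$ (not $2C_{\mathcal{L}}/m$) and the symmetrization factor is $2$ (not $4$), both consequences of $\widehat{R}_{\mathrm{tac}}$ containing a single loss term per example rather than a difference of two. These halved constants are precisely what distinguishes the bound here from the one in Lemma~\ref{lem_kac}.
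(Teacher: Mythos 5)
Your proof is correct and is exactly the argument the paper intends: the paper omits the proof of Lemma~\ref{lem_tac}, stating only that it parallels Lemma~\ref{lem_kac}, and your write-up supplies precisely that parallel, with the bounded-difference constant $C_{\mathcal{L}}/m$ and symmetrization factor $2$ correctly halved relative to Lemma~\ref{lem_kac} because each unlabeled example contributes a single loss term rather than a difference of two.
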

\begin{proof}
Lemma \ref{lem_tac} can be proved similarly as Lemma \ref{lem_kac}, hence we omit the proof.
\end{proof}
By combining Lemma \ref{lem_kac} and Lemma \ref{lem_tac} together, for $\delta>0$, with probability at least $1-\delta$, we have
\begin{align}
\nonumber
\sup\nolimits_{f\in\mathcal{F}}\left|R_{\mathrm{LAC}}(f)-\widehat{R}_{\mathrm{LAC}}(f)\right| &\leq C_{k,\rho,\delta}(\frac{2\theta}{\sqrt{n}}+\frac{1}{\sqrt{m}}),
\end{align}
where $C_{k,\rho,\delta} = (2\sqrt{2}\rho(k+1)C_{\mathcal{F}}+C_{\mathcal{L}}\sqrt{\frac{\log\frac{4}{\delta}}{2}})$.

In addition, we can obtain
\begin{align}
\nonumber
R(\widehat{f})-R(f^\star)
&=R_{\mathrm{LAC}}(\widehat{f})-R_{\mathrm{LAC}}(f^\star)\\
\nonumber
&= R_{\mathrm{LAC}}(\widehat{f}) - \widehat{R}_{\mathrm{LAC}}(\widehat{f}) + \widehat{R}_{\mathrm{LAC}}(\widehat{f}) - R_{\mathrm{LAC}}({f}^\star)\\
\nonumber
&\leq R_{\mathrm{LAC}}(\widehat{f}) - \widehat{R}_{\mathrm{LAC}}(\widehat{f}) + R_{\mathrm{LAC}}(\widehat{f}) - R_{\mathrm{LAC}}({f}^\star)\\
\nonumber
&\leq 2\sup\nolimits_{f\in\mathcal{F}}\left|R_{\mathrm{LAC}}(f)-\widehat{R}_{\mathrm{LAC}}(f)\right|.
\end{align}
Therefore, for $\delta>0$, with probability at least $1-\delta$, we have
\begin{align}
\nonumber
R(\widehat{f})-R(f^\star) &\leq C_{k,\rho,\delta}(\frac{2\theta}{\sqrt{n}}+\frac{1}{\sqrt{m}}),
\end{align}
where $C_{k,\rho,\delta} = (4\sqrt{2}\rho(k+1)C_{\mathcal{F}}+2C_{\mathcal{L}}\sqrt{\frac{\log\frac{4}{\delta}}{2}})$. \qed

\bibliographystyle{ieeetr}
\end{document}